\declaretheorem[name=Proposition]{prp}
\newcommand{\smallsum}[2]{{\textstyle%
	\sum\limits_{\scriptscriptstyle%
        #1}^{\scriptscriptstyle #2}}}
\newcommand{\smallfrac}[2]{{\textstyle\frac{#1}{#2}}}
\newcommand{\smalldv}[1]{{\textstyle\dv{#1}}}
\newcommand{\statespace}{\mathcal{S}}
\newcommand{\actionspace}{\mathcal{A}}
\newcommand{\probspace}{\mathscr{P}}
\newcommand{\reals}{\mathbb{R}}
\newcommand{\rewdist}{\mathcal{R}}
\newcommand{\discount}{\gamma}
\newcommand{\ssdist}{\mu}
\newcommand{\f}{f}
\newcommand{\fii}{u}
\newcommand{\fiii}{g}
\newcommand{\datax}{\mathcal{X}}
\newcommand{\datay}{\mathcal{Y}}
\newcommand{\nngpk}{\kappa}
\newcommand{\ntkk}{\Theta}
\newcommand{\flin}{f_{\text{lin}}}
\newcommand{\err}{\epsilon}
\newcommand{\rnd}{\text{rnd}}
\newcommand{\param}{\theta}
\newcommand{\paramii}{\vartheta}
\newcommand{\paramiii}{\psi}
\newcommand{\nonlinearity}{\phi}
\newcommand{\loss}{\mathcal{L}}
\title{Universal Value-Function Uncertainties}
\author{
  Moritz A. Zanger, Max Weltevrede, Yaniv Oren, Pascal R. Van der Vaart, \\ \textbf{Caroline Horsch, Wendelin B\"ohmer, Matthijs T. J. Spaan} \\
  Department of Intelligent Systems \\
  Delft University of Technology \\
  Delft, 2628 XE, The Netherlands \\
  Correspondence: \texttt{m.a.zanger@tudelft.nl}
}
\begin{document}

\maketitle

\begin{abstract}
  Estimating epistemic uncertainty in value functions is a crucial challenge for many aspects of reinforcement learning (RL), including efficient exploration, safe decision-making, and offline RL. While deep ensembles provide a robust method for quantifying value uncertainty, they come with significant computational overhead. Single-model methods, while computationally favorable, often rely on heuristics and typically require additional propagation mechanisms for myopic uncertainty estimates. In this work we introduce universal value-function uncertainties (UVU), which, similar in spirit to random network distillation (RND), quantify uncertainty as squared prediction errors between an online learner and a fixed, randomly initialized target network. Unlike RND, UVU errors reflect policy-conditional \emph{value uncertainty}, incorporating the future uncertainties \emph{any given policy} may encounter. This is due to the training procedure employed in UVU: the online network is trained using temporal difference learning with a synthetic reward derived from the fixed, randomly initialized target network. We provide an extensive theoretical analysis of our approach using neural tangent kernel (NTK) theory and show that in the limit of infinite network width, UVU errors are exactly equivalent to the variance of an ensemble of independent universal value functions. Empirically, we show that UVU achieves equal performance to large ensembles on challenging multi-task offline RL settings, while offering simplicity and substantial computational savings.
\end{abstract}

\section{Introduction}

Deep reinforcement learning (RL) has emerged as an essential paradigm for addressing difficult sequential decision-making problems \citep{mnihPlayingAtariDeep2013, silver2016mastering, vinyals2019grandmaster} but a more widespread deployment of agents to real-world applications remains challenging. Open problems such as efficient exploration, scalable offline learning and safety pose persistent obstacles to this transition. Central to these capabilities is the quantification of \emph{epistemic uncertainty}, an agent's uncertainty due to limited data. In the context of RL, uncertainty estimation relating to the \emph{value function} is of particular importance as it reflects uncertainty about long-term consequences of actions.

However, computationally tractable estimation of value-function uncertainty remains a challenge. Bayesian RL approaches, both in its model-based \citep{ghavamzadehBayesianReinforcementLearning2015a} and model-free \citep{deardenBayesianQLearning} flavors, typically come with sound theoretical underpinnings but face significant computational hurdles due to the general intractability of posterior inference. Theoretical guarantees of the latter are moreover often complicated by the use of training procedures like temporal difference (TD) learning with bootstrapping. Conversely, deep ensembles \citep{lakshminarayananSimpleScalablePredictive2017} have emerged as a reliable standard for practical value uncertainty estimation in deep RL \citep{osbandDeepExplorationBootstrapped2016b, chen2017ucb}. Empirically, independently trained value functions from random initialization provide effective uncertainty estimates that correlate well with true estimation errors. Although in general more tractable than full posterior inference, this approach remains computationally challenging for larger models where a manyfold increase in computation and memory severely limits scalability. Various single-model approaches like random network distillation (RND) \citep{burda2018exploration}, pseudo counts \citep{bellemareUnifyingCountBasedExploration2016} or intrinsic curiosity \citep{pathakCuriosityDrivenExplorationSelfSupervised2017} efficiently capture \emph{myopic} epistemic uncertainty but require additional propagation mechanisms to obtain value uncertainties \citep{odonoghueUncertaintyBellmanEquation2018, janzSuccessorUncertaintiesExploration2019, zhou2020deep} and often elude a thorough theoretical understanding. We conclude that there persists a lack of computationally efficient single-model approaches with the ability to directly estimate policy-dependent value uncertainties with a strong theoretical foundation.

To this end, we introduce \emph{universal value-function uncertainties} (UVU), a novel method designed to estimate epistemic uncertainty of value functions for any given policy using a single-model architecture. Similar in spirit to the well-known RND algorithm, UVU quantifies uncertainty through a prediction error between an online learner $\fii$ and a fixed, randomly initialized target network $\fiii$. Crucially, and in contrast to the regression objective of RND, UVU optimizes its online network $\fii$ using temporal difference (TD) learning with a synthetic reward $r_\fiii$ generated entirely from the target network $\fiii$. By construction, the reward $r_\fiii$ implies a value learning problem to which the target function $\fiii$ itself is a solution, forcing the online learner $\fii$ to recover $\fiii$ through minimization of TD losses. UVU then quantifies uncertainty as the squared prediction error between online learner and fixed target function. Unlike previous methods, our design requires no training of multiple models (e.g., ensembles) nor separate value and uncertainty models (e.g., RND, ICM). Furthermore, we design UVU as a universal policy-conditioned model (comparable to universal value function approximators \citep{schaul2015universal}), that is, it takes as input a state, action, and policy encoding and predicts the epistemic uncertainty associated with the value function for the encoded policy. 

A key contribution of our work is a thorough theoretical analysis of UVU using the framework of neural tangent kernels (NTK) \citep{jacotNeuralTangentKernel2020}. Specifically, we characterize the learning dynamics of wide neural networks with TD losses and gradient descent to obtain closed-form solutions for the convergence and generalization behavior of neural network value functions. In the limit of infinite network width, we then show that prediction errors generated by UVU are equivalent to the variance of an ensemble of universal value functions, both in expectation and with finite sample estimators. 

We validate UVU empirically on an offline multi-task benchmark from the minigrid suite where agents are required to reject tasks they cannot perform to achieve maximal scores. We show that UVU's uncertainty estimates perform comparably to large deep ensembles, while drastically reducing the computational footprint.

\section{Preliminaries}\label{sec:background}
We frame our work within the standard Markov Decision Process (MDP)~\citep{bellmanMarkovianDecisionProcess1957} formalism, defined by the tuple $(\statespace, \actionspace, \rewdist, \discount, P, \ssdist)$. Here, $\statespace$ is the state space, $\actionspace$ is the action space, $\rewdist : \statespace \times \actionspace \rightarrow \probspace (\reals)$ is the distribution of immediate rewards, $\discount \in [0,1)$ is the discount factor, $P: \statespace \times \actionspace \rightarrow \probspace(\statespace)$ is the transition probability kernel, and $\ssdist: \probspace(\statespace)$ is the initial state distribution. An RL agent interacts with this environment by selecting actions according to a policy $\pi: \statespace \rightarrow \probspace(\actionspace)$. At each timestep $t$, the agent is in state $S_t$, takes action $A_t \sim \pi(\cdot|S_t)$, receives a reward $R_t \sim \rewdist(\cdot|S_t,A_t)$, and transitions to a new state $S_{t+1} \sim P(\cdot|S_t,A_t)$. We quantify the merit of taking actions $A_t=a$ in state $S_t=s$ and subsequently following policy $\pi$ by the action-value function, or Q-function $Q^\pi:\statespace \times \actionspace \xrightarrow{} \reals$, which accounts for the cumulative discounted future rewards and adheres to a recursive consistency condition described by the Bellman equation
\begin{align}\label{eq:bellman_expectation}
    Q^\pi(s,a) = \mathbb{E}_{\rewdist, \pi, P}[R_0 + \gamma Q^\pi(S_1,A_1)|S_0=s, A_0=a].
\end{align}
The agent's objective then is to maximize expected returns $J(\pi)=\mathbb{E}_{S_0\sim\ssdist,A_0\sim\pi(\cdot|S_0)}[Q^\pi(S_0,A_0)]$. 

Often, we may be interested in agents capable of operating a variety of policies to achieve different goals. Universal value function approximators (UVFAs) \citep{schaul2015universal} address this by conditioning value functions additionally on an encoding $z \in \mathcal{Z}$. This encoding specifies a current policy context, indicating for example a task or goal. We denote such \emph{universal} $Q$-functions as $Q(s,a,z)$. In the context of this work, we consider $z$ to be a parameterization or indexing of a specific policy $\pi(\cdot|s,z)$, or in other words $Q:\statespace\times\actionspace\times\mathcal{Z} \xrightarrow{} \reals, \quad  Q(s,a,z) \equiv Q^{\pi(\cdot|s,z)}(s,a)$. 

Both in the single and multi task settings, obtaining effective policies may require efficient exploration and an agent's ability to reason about \emph{epistemic} uncertainty. This source of uncertainty, in contrast to \emph{aleatoric} uncertainty, stems from a lack of knowledge and may in general be reduced by the acquisition of data. In the context of RL, we make an additional distinction between \emph{myopic uncertainty} and \emph{value uncertainty}. 

\subsection{Myopic Uncertainty and Neural Tangent Kernels} \label{sec:myopic_ntk}
Myopic uncertainty estimation methods, such as RND or ensembles predicting immediate rewards or next states, quantify epistemic uncertainty without explicitly accounting for future uncertainties along trajectories. We first briefly recall the RND algorithm \citep{burda2018exploration} , before introducing the neural tangent kernel (NTK) \citep{jacotNeuralTangentKernel2020} framework. 

 Random network distillation comprises two neural networks: A fixed, randomly initialized \emph{target network} $\fiii(x; \paramiii_0)$, and a \emph{predictor network} $\fii(x; \paramii_t)$. The online predictor $\fii(x; \paramii_t)$ is trained via gradient descent to minimize a square loss between its own predictions and the target network's output on a set of data points $\datax = \{x_i \in \reals^{d_{\text{in}}}\}_{i=1}^{N_D}$. The RND prediction error at a test point $x$ then serves as an uncertainty or novelty signal. The loss and error function of RND are then given as
\begin{align}\label{eq:rnd_loss}
    \loss_{\rnd}(\param_t) = \smallfrac{1}{2} ( \fii(\datax; \param_t) - \fiii(\datax; \paramiii_0) )^2 \,, \quad \text{and} \quad \err_{\rnd}^2(x; \paramii_t, \paramiii_0) = \smallfrac{1}{2} ( \fii(x; \paramii_t) - \fiii(x; \paramiii_0) )^2 \,.
\end{align}
This mechanism relies on the idea that the predictor network recovers the outputs of the target network only for datapoints contained in the dataset $x_i \in \datax$, while a measurable error $\err_{\rnd}^2$ persists for out-of-distribution test samples $x_T \notin \datax$, yielding a measure of epistemic uncertainty. 

Next, we introduce the framework of neural tangent kernels, an analytical framework we intend to employ for the study of neural network and deep ensemble behavior. Consider a neural network $\f(x,\param_t):\reals^{n_{\text{in}}} \rightarrow \reals$ with hidden layer widths $n_1, \dots, n_L = n$ and inputs $x \in \reals^{n_{\text{in}}}$, a dataset $\datax$, and labels $\datay = \{ y_i  \in \reals \}_{i=1}^{N_D}$. Inputs $x_i$ may, for example, be state-action tuples and labels $y_i$ may be rewards. The network parameters $\param_0 \in \reals^{n_{\text{p}}}$ are initialized randomly $\param_0 \sim \mathcal{N}(0,1)$ and updated with gradient descent with infinitesimal step sizes, also called gradient flow. In the limit of infinite width $n$, the function initialization $f(\cdot,\param_0)$, as shown by \citet{lee2017deep}, is equivalent to a Gaussian process prior with a specific kernel $\nngpk:\reals^{n_{\text{in}}}\times\reals^{n_{\text{in}}}\xrightarrow{}\reals$ called the neural network Gaussian process (NNGP). The functional evolution of $\f$ through gradient flow is then governed by a \emph{gradient} inner product kernel $\ntkk:\reals^{n_{\text{in}}}\times\reals^{n_{\text{in}}}\xrightarrow{}\reals$ yielding
\begin{align}
    \ntkk(x,x') = \nabla_\param \f(x,\param_0)^\top \nabla_\param \f(x',\param_0), &\quad\text{and}\quad  \nngpk(x,x') = \mathbb{E}[f(x,\param_0) f(x',\param_0)] \,.
\end{align}
Remarkably, seminal work by \citet{jacotNeuralTangentKernel2020} showed that in the limit of infinite width and appropriate parametrization\footnote{so-called NTK parametrization scales forward/backward passes appropriately, see \citet{jacotNeuralTangentKernel2020}}, the kernel $\ntkk$ becomes deterministic and remains constant throughout training. This limiting kernel, referred to as the neural tangent kernel (NTK), leads to analytically tractable training dynamics for various loss functions, including the squared loss $\mathcal{L}(\param_t) = \smallfrac{1}{2} \| \f(\mathcal{X};\theta_t)-\mathcal{Y}\|^2_2$. Owing to this, one can show \citep{jacotNeuralTangentKernel2020, leeWideNeuralNetworks2020} that for $t \xrightarrow{} \infty$ post convergence function evaluations $f(\datax_T,\param_\infty)$ on a set of test points $\datax_T$, too, are Gaussian with mean $\mathbb{E}[f(\datax_T,\param_\infty)]=\ntkk_{\datax_T\datax}\ntkk_{\datax\datax}^{-1}\datay$ and covariance
\begin{align} \label{eq:ntkcov} 
    \mathrm{Cov}[f(\datax_T, \param_\infty)] = \nngpk_{\datax_T\datax_T} - \bigl( \ntkk_{\datax_T, \datax} \ntkk_{\datax \datax}^{-1} \nngpk_{\datax \datax_T} + h.c. \bigr) + \ntkk_{\datax_T \datax} \ntkk_{\datax \datax}^{-1} \nngpk_{\datax \datax} \ntkk_{\datax \datax}^{-1} \ntkk_{\datax \datax_T},
\end{align}
where $h.c.$ denotes the Hermitian conjugate of the preceding term and we used the shorthands $\ntkk_{\datax_1 \datax_2} = \ntkk(\datax_1,\datax_2)$ and $\nngpk_{\datax_1 \datax_2} = \nngpk(\datax_1,\datax_2)$. This expression provides a closed-form solution for the epistemic uncertainty captured by an infinite ensemble of NNs in the NTK regime trained with square losses. For example, the predictive variances of such ensembles are easily obtained as the diagonal entries of Eq.~\ref{eq:ntkcov}. While requiring an idealized setting, NTK theory offers a solid theoretical grounding for quantifying the behavior of deep ensembles and, by extension, myopic uncertainty estimates from related approaches. However, this analysis does not extend to value functions trained with TD losses and bootstrapping as is common in practical reinforcement learning settings.

\subsection{Value Uncertainty}\label{sec:value_uncertainty}
In contrast to myopic uncertainties, value uncertainty quantifies a model's lack of knowledge in the value $Q^\pi(s,a)$. As such it inherently depends on future trajectories induced by policies $\pi$. Due to this need to account for accumulated uncertainties over potentially long horizons, value uncertainty estimation typically renders more difficult than its myopic counterpart. 

A widely used technique\citep{osbandDeepExplorationBootstrapped2016b, chen2017ucb, an2021uncertainty} to this end is the use of deep ensembles of value functions $Q(s,a,\param_t):\statespace \times \actionspace \xrightarrow{} \reals$ from random initializations $\param_0$. $Q$-functions are trained on transitional data $\datax_{TD} = \{ s_i,a_i \}_{i=1}^{N_D}$, $\datax'_{TD} = \{s'_i,a'_i \}_{i=1}^{N_D}$, and $r = \{ r_i \}_{i=1}^{N_D}$, where $s'_i$ are samples from the transition kernel $P$ and $a'_i$ are samples from a policy $\pi$. $Q$-functions are then optimized through gradient descent on a temporal difference (TD) loss given by
\begin{align} \label{eq:tdloss}
\mathcal{L}(\param_t) &= \smallfrac{1}{2}\|\, [\gamma Q^\pi(\datax'_{TD},\param_t)]_{\text{sg}} + r - Q^\pi(\datax_{TD},\param_t)\,\|^2_2,
\end{align}
where $[\cdot]_{\text{sg}}$ indicates a stop-gradient operation. Due to the stopping of gradient flow through $Q(\datax',\param_t)$, we refer to this operation as semi-gradient updates. Uncertainty estimates can then be obtained as the variance $\sigma_q^2(s,a) = \mathbb{V}_{\param_0}[Q(s,a,\param_t)]$ between ensembles of $Q$-functions from random initializations. While empirically successful, TD-trained deep ensembles are not as well understood as the supervised learning setting outlined in the previous section ~\ref{sec:myopic_ntk}. Due to the use of bootstrapped TD losses, the closed-form NTK regime solutions in Eq.~\ref{eq:ntkcov} do not apply to deep value function ensembles. 

An alternative to the above approach is the propagation of myopic uncertainty estimates. Several prior methods\citep{odonoghueUncertaintyBellmanEquation2018, zhou2020deep, luis2023model} formalize this setting under a model-based perspective, where transition models $\tilde{P}(\cdot|s,a)$ are sampled from a Bayesian posterior conditioned on transition data up to $t$. For acyclic MDPs, this setting permits a consistency condition similar to the Bellman equation that upper bounds value uncertainties recursively. While this approach devises a method for obtaining value uncertainties from propagated myopic uncertainties, several open problems remain, such as the tightness of model-free bounds of this kind \citep{janzSuccessorUncertaintiesExploration2019, van2025epistemic} as well as how to prevent \emph{underestimation} of these upper bounds due to the use of function approximation \citep{whiteson2020optimistic, zanger2024diverse}. 

\section{Universal Value-Function Uncertainties}\label{sec:uvu}
Our method, \emph{universal value-function uncertainties} (UVU), measures epistemic value uncertainty as the prediction errors between an online learner and a fixed target network, similar in spirit to random network distillation \citep{burda2018exploration}. However, while RND quantifies myopic uncertainty through immediate prediction errors, UVU modifies the training process of the online learner such that the resulting prediction errors reflect value-function uncertainties, that is, uncertainty about long-term returns under a given policy. 

Our method centers around the interplay of two distinct neural networks: an online learner $\fii(s,a,z,\paramii_t):\statespace\times\actionspace\times\mathcal{Z} \xrightarrow{} \reals$, parameterized by weights $\paramii_t$, and a fixed, randomly initialized target network $\fiii(s,a,z,\paramiii_0):\statespace\times\actionspace\times\mathcal{Z} \xrightarrow{} \reals$,  parameterized by weights $\paramiii_0$. Given a transition $(s,a,s')$ and policy encoding $z$, we draw subsequent actions $a'$ from a policy $\pi(\cdot|s', z)$. Then, we use the fixed target network $\fiii$ to generate synthetic rewards as
\begin{equation}
    r_\fiii^z(s,a,s',a') = \fiii(s,a,z,\paramiii_0) - \gamma \fiii(s',a',z,\paramiii_0) \,.
    \label{eq:synthetic_rewards}
\end{equation}
While the weights $\paramiii_0$ of the target network remain fixed at initialization, the online network $\fii$ is trained to minimize a TD loss using the synthetic reward $r_\fiii^\pi$. Given a dataset $\datax = \{ s_i,a_i, z_i \}_{i=1}^{N_D}$, we have
\begin{equation}\label{eq:uvu_td_loss}
    \mathcal{L}(\paramii_t) = \smallfrac{1}{2 N_D}\smallsum{i}{N_D} \bigl(\gamma \left[\fii(s_i',a_i',z_i,\paramii_t)\right]_{\text{sg}} + r^z_g(s_i,a_i,s_i',a_i') - \fii(s_i,a_i,z_i,\paramii_t) \bigr)^2,
\end{equation}
where $[\cdot]_{\text{sg}}$ indicates a stop-gradient operation. For any tuple $(s,a,z)$ ($\in\datax$ or not), we measure predictive uncertainties as squared prediction errors between the learner and the target function 
\begin{align}
    \err(s,a,z,\paramii_t, \paramiii_0)^2 = \bigl( \fii(s,a,z,\paramii_t) - \fiii(s,a,z,\paramiii_0) \bigr)^2.
\end{align}
The intuition behind this design is that, by construction, the value-function associated with policy $\pi(\cdot|s,z)$ and the synthetic rewards $r_g^z(s,a,s',a')$ exactly equals the fixed target network $\fiii(s,a,z,\paramiii_0)$. As a sanity check, note that the target function $\fiii(s,a,z,\paramiii_0)$ itself satisfies the Bellman equation for the policy $\pi(\cdot|s, z)$ and the synthetic reward definition in Eq.~\eqref{eq:synthetic_rewards}, constituting a \emph{random value function} to $r_g^z$ and hence achieves zero-loss according to Eq.~\eqref{eq:uvu_td_loss}. Therefore, if the dataset $\datax$ sufficiently covers the dynamics induced by $\pi(\cdot|s,z)$, the online network $\fii(s,a,z,\paramii_0)$ is able to recover $\fiii(s,a,z,\paramiii_0)$ exactly, nullifying prediction errors. However, when data coverage is incomplete for the evaluated policy, minimization of the TD loss~\ref{eq:uvu_td_loss} is not sufficient for the online network $\fii(s,a,z,\paramii_0)$ to recover target network predictions $\fiii(s,a,z,\paramiii_0)$. This discrepancy is captured by the prediction errors, which quantify epistemic uncertainty regarding future gaps of the available data. 

\subsection{Building Intuition by an Example}
\begin{wrapfigure}{r}{0.35 \columnwidth}
    \vspace{-12mm}
    \begin{center} 
        \includegraphics[width=0.35 \columnwidth]{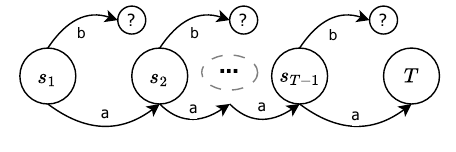}
    \end{center}
    \vspace{-2mm}
    \caption{ Chain MDP of length $N$ with unexplored actions $b$.}
    \label{fig:chainmdp}
    \vspace{-2mm}
\end{wrapfigure}
To build intuition for how UVU operates and captures value uncertainty, we first consider a tabular setting with a simple chain MDP as illustrated in Figure~\ref{fig:chainmdp}. Suppose we collect data from a deterministic policy $\pi_d$ using action $a$ exclusively. Given this dataset, suppose we would like to estimate the uncertainty associated with the value $Q^{\pi(\cdot|s,z)}(s,a)$ of a policy $\pi(\cdot|s,z)$ that differs from the data-collection policy in that it chooses action ``\emph{b}'' in $s_3$. In our tabular setting, we then initialize random tables $\fii_{sa}$ and $\fiii_{sa}$. For every transition $(s_t, a_t, s_{t+1})$ contained in our single-trajectory dataset, we draw $a_{t+1} \sim \pi(\cdot|s,z)$, compute the reward $r_{g,t}$ as $r_{g,t} = \fiii_{s_t a_t} - \gamma \fiii_{s_{t+1} a_{t+1}}$ and update table entries with the rule $\fii_{s_t a_t} \xleftarrow{} r_{g,t} + \gamma u_{s_{t+1} a_{t+1}}$. Fig.~\ref{fig:toy_viz} visualizes this process for several independently initialized tables (rows in Fig.~\ref{fig:toy_viz}) for the data-collecting policy $\pi_d$ (left), and for the altered policy $\pi(\cdot|s,z)$ (right), which chooses action ``\emph{b}'' in $s_3$. We outline how this procedure yields uncertainty estimates: We first note, that one may regard $\fiii$ as a randomly generated value-function, for which we derive the corresponding reward function as $r_\fiii$. As $\fiii_{sa}$, by construction, is the value-function corresponding to $r_\fiii$, one may expect that the update rule applied to $\fii_{sa}$ causes $\fii_{sa}$ to recover $\fiii_{sa}$. Crucially, however, this is only possible if sufficient data is available for the evaluated policy. When a policy diverges from available data, as occurs under $\pi(\cdot|s,z)$ in $s_3$, this causes an effective truncation of the collected trajectory. Consequently, $u_{s_1a}$ and $u_{s_2a}$ receive updates from $u_{s_3b}$, which remains at its initialization, rather than inferring the reward-generating function $\fiii_{sa}$. In the absence of long-term data, the empirical Bellman equations reflected in our updates do not uniquely determine the underlying value function $\fiii_{sa}$. Indeed, both $\fii_{sa}$ and $\fiii_{sa}$ incur zero TD-error in the r.h.s. of Fig.~\ref{fig:toy_viz}, yet differ significantly from each other. It is this ambiguity that UVU errors $(\fiii_{sa} - \fii_{sa})^2$ quantify. To ensure $\fii$ recovers $\fiii$, longer rollouts under the policy $\pi(\cdot|s,z)$ are required to sufficiently constrain the solution space dictated by the Bellman equations (as seen in Fig.~\ref{fig:toy_viz} left).

\begin{figure}[t]
    \begin{center}
        \vspace{-6mm}
        \begin{subfigure}{0.65\textwidth}
            \centering
            \includegraphics[width=\columnwidth]{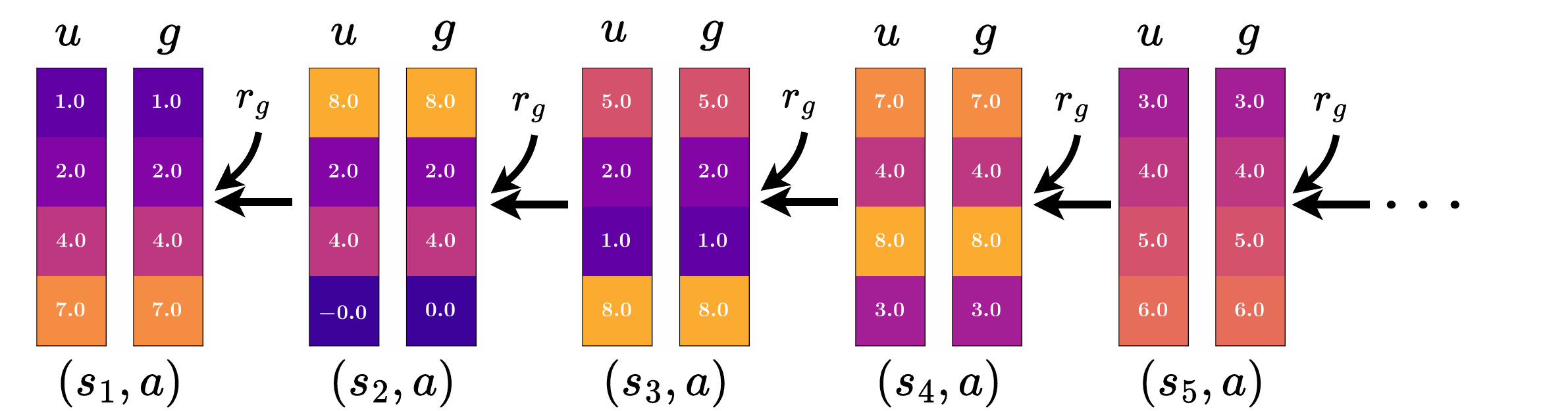}
        \end{subfigure}
        \hspace{0mm}
        \begin{subfigure}{0.32\textwidth}
            \centering
            \includegraphics[width=\columnwidth]{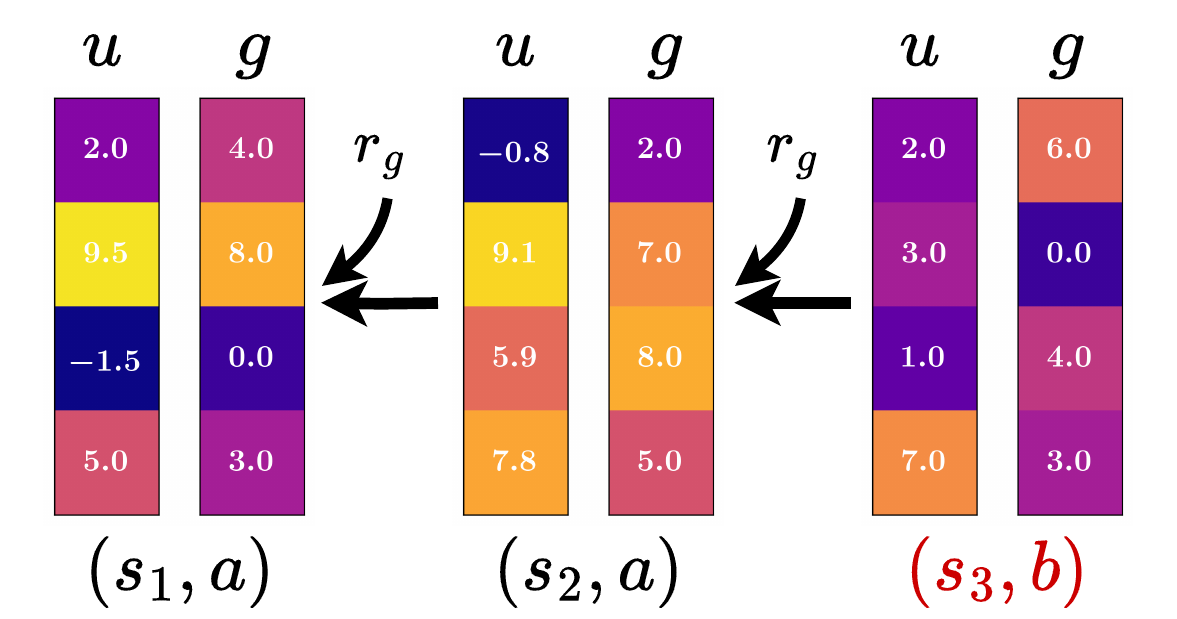}
        \end{subfigure}        
        \caption{(\textit{left}:) Illustration of uncertainty estimation in tabular UVU with $4$ independently initialized tables for $\fii$ and $\fiii$. Access to full trajectory data allows $\fii$ to recover $\fiii$. (\textit{right}:) By executing action ``b'', trajectories are effectively truncated, preventing $\fii$ from recovering $\fiii$. All plots use $\gamma=0.7$. }
        \label{fig:toy_viz}
        \vspace{-6mm}
    \end{center}
\end{figure}
Figure~\ref{fig:toyvariances} illustrates uncertainty estimates for the shown chain MDP using neural networks and for a whole family of policies $\pi(\cdot|s,z)$ which select the unexplored action $b$ with probability $1-z$. We analyze the predictive variance of an ensemble of $128$ universal $Q$-functions, each conditioned on the policy $\pi(\cdot|s,z)$. In the bottom row, we plot the squared prediction error of a single UVU model, averaged over $128$ independent heads. Both approaches show peaked uncertainty in early sections, as policies are more likely to choose the unknown action ``\emph{b}'' eventually, and low uncertainty closer to the terminal state and for $z$ close to $1$. A comparison with RND is provided in the Appendix~\ref{sup:addexps}.

\section{What Uncertainties Do Universal Value-Function Uncertainties Learn?}\label{sec:theory}

While the previous section provided intuition for UVU, we now derive an analytical characterization of the uncertainties captured by the prediction errors $\err$ between a converged online learner $\fii$ and the fixed target $\fiii$. We turn to NTK theory to characterize the generalization properties of the involved neural networks in the limit of infinite width, allowing us to draw an exact equality between the squared predictions errors of UVU and the variance of universal value function ensembles. 

\begin{figure}[t]
    \begin{center}
        \vspace{-3mm}
        \includegraphics[width=\columnwidth]{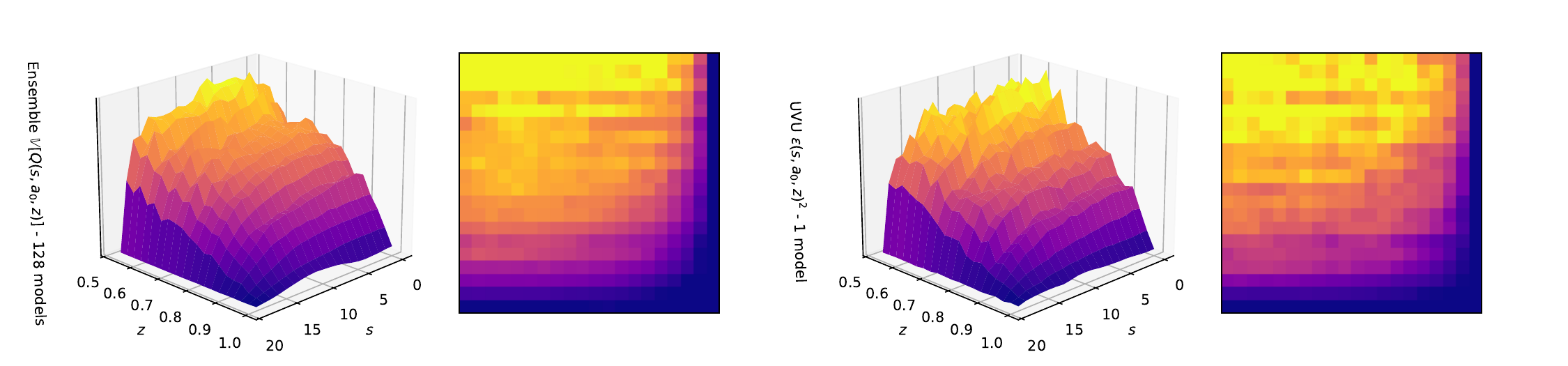}
        \vspace{-3mm}
        \caption{\textit{From left to right, (1. and 2.):} Variance of an ensemble of 128 universal Q-functions trained on a chain MDP dataset. \textit{(3. and 4.):} Value uncertainty as measured by UVU prediction errors with a single 128-headed model. All plots evaluate the ``$a$'' action of the chain MDP.}
        \label{fig:toyvariances}
        \vspace{-4mm}
    \end{center}
\end{figure}

In the following analysis, we use the notational shorthand $x = (s,a,z)$ and $x'=(s',a',z)$ and denote a neural network $f(x,\param_t)$ with hidden layer widths $n_1, \dots, n_L = n$, transitions from $\datax = \{ (s_i,a_i,z_i) \}_{i=1}^{N_D}$ to $\datax' = \{(s'_i,a'_i,z_i) \}_{i=1}^{N_D}$, where $a'_i\sim\pi(\cdot|s_i',z_i)$,  and rewards $r = \{ r_i \}_{i=1}^{N_D}$. The evolution of the parameters $\param_t$ under gradient descent with infinitesimal step sizes, also called gradient flow, is driven by the minimization of TD losses with
\begin{align} \label{eq:tdloss_theory}
    \smallfrac{\text{d}}{\text{d}t}\param_t = -\alpha \nabla_\param \mathcal{L}(\param_t)\,, \quad \text{and} \quad \mathcal{L}(\param_t) = \smallfrac{1}{2}\|\, [\gamma \f(\datax',\param_t)]_{\text{sg}} + r - \f(\datax,\param_t)\,\|^2_2 \,.
\end{align}
We study the dynamics induced by this parameter evolution in the infinite-width limit $n \to \infty$. In this regime, the learning dynamics of $\f$ become linear as the NTK becomes deterministic and stationary, permitting explicit closed-form expressions for the evolution of the function $\f(x,\param_t)$. In particular, we show that the post convergence function $\lim_{t\xrightarrow{}\infty} \f(x,\param_t)$ is given by
\begin{align} \label{eq:postfunc}
        f(x,\param_\infty) &= 
        f(x,\param_0) - \ntkk_{x \datax} \left( \ntkk_{\datax \datax} - \gamma \ntkk_{\datax' \datax} \right)^{-1} \left( f(\datax,\param_0 ) - ( \gamma f(\datax',\param_0) + r ) \right),
\end{align}
where $\ntkk_{xx'}$ is the NTK of $f$. Proof is given in Appendix~\ref{app:tddynamics}. This identity is useful to our analysis as it delineates any converged function $f(x,\param_\infty)$ trained with TD losses~\ref{eq:tdloss_theory} through its initialization $f(x,\param_0)$. Theorem~\ref{thm:wide_nns_td} leverages this deterministic dependency to express the distribution of post convergence functions over random initializations $\param_0$. 
\begin{restatable}{thm}{widennstd}\label{thm:wide_nns_td}
    Let $f(x,\param_t)$ be a NN with $L$ hidden layers of width $n_1, \dots, n_L = n$ trained with gradient flow to reduce the TD loss $\loss(\param_t) = \smallfrac{1}{2}\|\,\gamma [f(\datax',\param_t)]_{\text{sg}} + r - f(\datax,\param_t)\,\|^2_2$. In the limit of infinite width $n \xrightarrow{} \infty$ and time $t \xrightarrow{} \infty$, the distribution of predictions $f(\datax_T,\param_\infty)$ on a set of test points $\datax_T$ converges to a Gaussian with mean and covariance given by
    \begin{align*}
        \mathbb{E}_{\param_0}\bigl[f(\datax_T,\param_\infty)\bigr] &= \ntkk_{\datax_T\datax} \Delta^{-1}_\datax r, \\
        \mathrm{Cov}_{\param_0} \bigl[ f(\datax_T,\param_\infty) \bigr] &= \nngpk_{\datax_T \datax_T} \!-\! (\ntkk_{\datax_T \datax} \Delta_\datax^{-1} \Lambda_{\datax_T} \!+\! h.c.) \!+\! (\ntkk_{\datax_T \datax} \Delta^{-1}_\datax (\Lambda_\datax \!-\! \gamma \Lambda_{\datax'}) \Delta_\datax^{-1^\top} \ntkk_{\datax \datax_T}),
    \end{align*}
    where $\ntkk_{xx'}$ is the NTK, $\nngpk_{xx'}$ is the NNGP kernel, $h.c.$ denotes the Hermitian conjugate, and
    \begin{align*}
        \Delta_{\tilde{\datax}} &= \ntkk_{\datax \tilde{\datax}} - \gamma \ntkk_{\datax' \tilde{\datax}}, \quad \text{ and } \quad \Lambda_{\tilde{\datax}} = \nngpk_{\datax \tilde{\datax}} - \gamma \nngpk_{\datax' \tilde{\datax}} \,.
    \end{align*}
\end{restatable}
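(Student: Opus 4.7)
The plan is to reduce the statement to the closed-form identity in Eq.~\eqref{eq:postfunc}, which (as proved in Appendix~\ref{app:tddynamics}) expresses $f(x, \param_\infty)$ as an explicit affine function of the initialization values $f(\cdot, \param_0)$, the rewards $r$, and kernel matrices built from $\ntkk$. Two standard NTK-regime facts then finish the argument: (i) in the infinite-width limit, $f(\cdot, \param_0)$ converges to a centered Gaussian process with covariance $\nngpk$ (the NNGP limit of \citet{lee2017deep}), and (ii) the empirical NTK $\ntkk$ converges to a deterministic, training-invariant kernel \citep{jacotNeuralTangentKernel2020}. Hence the coefficient matrices $\ntkk_{\datax_T \datax}$, $\Delta_\datax^{-1}$, etc., appearing in Eq.~\eqref{eq:postfunc} become deterministic constants, so $f(\datax_T, \param_\infty)$ is an affine image of the jointly Gaussian vector $(f(\datax_T, \param_0),\, f(\datax, \param_0),\, f(\datax', \param_0))$ and is therefore itself Gaussian in the limit.

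Once Gaussianity is established, only the mean and covariance remain to compute. The mean follows immediately from $\mathbb{E}_{\param_0}[f(\cdot, \param_0)] = 0$: the only surviving contribution in Eq.~\eqref{eq:postfunc} is the term carrying $r$, yielding $\mathbb{E}[f(\datax_T, \param_\infty)] = \ntkk_{\datax_T \datax} \Delta_\datax^{-1} r$. For the covariance, I would abbreviate $A := \ntkk_{\datax_T \datax} \Delta_\datax^{-1}$ and $Y := f(\datax, \param_0) - \gamma f(\datax', \param_0)$, so that Eq.~\eqref{eq:postfunc} reads $f(\datax_T, \param_\infty) = f(\datax_T, \param_0) - A Y + A r$. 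Expanding then gives three contributions: the self-covariance $\mathrm{Cov}[f(\datax_T, \param_0)] = \nngpk_{\datax_T \datax_T}$, a cross-covariance term (together with its Hermitian conjugate) between $f(\datax_T, \param_0)$ and $AY$, and the self-covariance $A\,\mathrm{Cov}[Y]\,A^\top$ of $AY$.

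The remaining substantive work is verifying that the cross block $\mathrm{Cov}[f(\datax_T, \param_0), Y] = \nngpk_{\datax_T \datax} - \gamma \nngpk_{\datax_T \datax'}$ equals $\Lambda_{\datax_T}^\top$, so that multiplication by $A^\top$ and symmetrization produces the stated ``$\ntkk_{\datax_T \datax} \Delta_\datax^{-1} \Lambda_{\datax_T} + h.c.$'' term; and that $\mathrm{Cov}[Y] = \nngpk_{\datax \datax} - \gamma \nngpk_{\datax \datax'} - \gamma \nngpk_{\datax' \datax} + \gamma^2 \nngpk_{\datax' \datax'}$ collapses to exactly $\Lambda_\datax - \gamma \Lambda_{\datax'}$, matching the final term. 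The main obstacle is therefore purely bookkeeping---keeping kernel-block orientations and transposes consistent so that the two $\Lambda$-structures emerge on the nose---since all the heavy lifting (the TD closed form Eq.~\eqref{eq:postfunc}, the NNGP prior, and the NTK convergence) is supplied by results already in hand.
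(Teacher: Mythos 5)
Your proposal is correct and follows essentially the same route as the paper's proof: both invoke the closed-form TD solution, the NNGP Gaussianity of initializations, and the deterministic limiting NTK to conclude that $f(\datax_T,\param_\infty)$ is an affine image of a centered Gaussian vector, then read off the mean and covariance. The only difference is organizational---the paper stacks $(f(\datax_T,\param_0), f(\datax,\param_0), f(\datax',\param_0))$ into a single block-matrix identity $f^\infty = Af^0 + b$ and extracts the $(1,1)$ block of $AKA^\top$, whereas you expand the covariance term by term with $Y = f(\datax,\param_0) - \gamma f(\datax',\param_0)$; your block computations $\mathrm{Cov}[f(\datax_T,\param_0),Y]^\top = \Lambda_{\datax_T}$ and $\mathrm{Cov}[Y] = \Lambda_\datax - \gamma\Lambda_{\datax'}$ check out.
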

Proof is provided in Appendix~\ref{app:tddynamics}. Theorem~\ref{thm:wide_nns_td} is significant as it allows us to formalize explicitly the expected behavior and uncertainties of neural networks trained with semi-gradient TD losses, including universal value function ensembles and the prediction errors of UVU. In particular, the variance of an ensemble of universal $Q$-functions $Q(\datax_T, \param_\infty)$ over random initializations $\param_0$ is readily given by the diagonal entries of the covariance matrix $\mathrm{Cov}[Q(\datax_T,\param_\infty)]$. Applied to the UVU setting, Theorem~\ref{thm:wide_nns_td} gives an expression for the converged online network $\fii(x, \paramii_\infty)=\ntkk_{x\datax}\Delta_{\datax}^{-1}r_g^z$ trained with the synthetic rewards $r_\fiii^z = \fiii(\datax,\paramiii_0) - \gamma \fiii(\datax',\paramiii_0)$. From this, It is straightforward to obtain the distribution of post convergence prediction errors $\frac{1}{2}\err(x,\paramii_\infty,\paramiii_0)^2$. In Corollary~\ref{thm:prediction_error}, we use this insight to conclude that the expected squared prediction errors of UVU precisely match the variance of value functions $Q(x, \param_\infty)$ from random initializations $\param_0$.

\begin{restatable}{crl}{predictionerror}\label{thm:prediction_error}
    Under the conditions of Theorem~\ref{thm:wide_nns_td}, let $\fii(x,\paramii_\infty)$ be a converged online predictor trained with synthetic rewards generated by the fixed target network $\fiii(x,\paramiii_0)$ with $r_\fiii^z = \fiii(\datax,\paramiii_0) - \gamma \fiii(\datax',\paramiii_0)$. Furthermore denote the variance of converged universal $Q$-functions $\mathbb{V}_{\param_0}[Q(x,\param_\infty)]$. Assume $\fii$, $\fiii$, and $Q$ are architecturally equal and parameters are drawn i.i.d. $\param_0, \paramii_0, \paramiii_0 \sim \mathcal{N}(0,1)$. The expected squared prediction error coincides with $Q$-function variance
    \begin{align}
        \mathbb{E}_{\paramii_0, \paramiii_0} \bigl[ \smallfrac{1}{2} \err(x,\paramii_\infty,\paramiii_0)^2 \bigr] &= \mathbb{V}_{\param_0}\bigl[ Q(x,\param_\infty) \bigr], 
    \end{align}
    where the l.h.s. expectation and r.h.s. variance are taken over random initializations $\paramii_0, \paramiii_0, \param_0$. 
\end{restatable}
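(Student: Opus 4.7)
The plan is to exploit the closed-form post-convergence expression from Eq.~\eqref{eq:postfunc}, substitute the UVU synthetic reward into it, and exhibit an algebraic cancellation that leaves the prediction error as a difference of two independent, identically distributed, mean-zero random functions. The expected squared error then reduces to twice the variance of one such term, which can be matched directly with the $Q$-function variance via Theorem~\ref{thm:wide_nns_td}. The key observation is that the UVU synthetic reward is engineered precisely so that $\fiii$ is an exact solution of the associated Bellman equation, which produces the needed cancellation.

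Concretely, I would introduce the linear functional
\[
\Phi[g;\param] := g(x,\param) - \ntkk_{x\datax}\Delta_\datax^{-1}\bigl(g(\datax,\param) - \gamma g(\datax',\param)\bigr),
\]
and apply Eq.~\eqref{eq:postfunc} to the online learner $\fii$ with reward $r_\fiii^z = \fiii(\datax,\paramiii_0)-\gamma \fiii(\datax',\paramiii_0)$, yielding
\[
\fii(x,\paramii_\infty) = \Phi[\fii;\paramii_0] + \ntkk_{x\datax}\Delta_\datax^{-1}\bigl(\fiii(\datax,\paramiii_0)-\gamma \fiii(\datax',\paramiii_0)\bigr).
\]
Recognising that the rightmost term equals $\fiii(x,\paramiii_0) - \Phi[\fiii;\paramiii_0]$ by definition of $\Phi$, subtracting the target gives the crucial cancellation
\[
\fii(x,\paramii_\infty) - \fiii(x,\paramiii_0) = \Phi[\fii;\paramii_0] - \Phi[\fiii;\paramiii_0].
\]

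To conclude, I would invoke three facts: (i) in the infinite-width limit $\fii(\cdot,\paramii_0)$ and $\fiii(\cdot,\paramiii_0)$ are zero-mean NNGPs, so both $\Phi$-terms have mean zero and the overall prediction error does as well; (ii) because $\fii$ and $\fiii$ share architecture and $\paramii_0,\paramiii_0$ are i.i.d., $\Phi[\fii;\paramii_0]$ and $\Phi[\fiii;\paramiii_0]$ are i.i.d., so $\mathbb{E}[\tfrac{1}{2}\err^2] = \mathbb{V}[\Phi[\fii;\paramii_0]]$; (iii) applying Eq.~\eqref{eq:postfunc} to $Q$ gives $Q(x,\param_\infty) = \Phi[Q;\param_0] + \ntkk_{x\datax}\Delta_\datax^{-1} r$, where the second term is deterministic in $r$, hence $\mathbb{V}[Q(x,\param_\infty)] = \mathbb{V}[\Phi[Q;\param_0]] = \mathbb{V}[\Phi[\fii;\paramii_0]]$ by architectural equality. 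Combining gives the claimed identity. As a consistency check, one can also evaluate $\mathbb{V}[\Phi[\fii;\paramii_0]]$ directly from NNGP covariances and recover the closed-form variance expression of Theorem~\ref{thm:wide_nns_td}.

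The only non-routine step is the cancellation of $\fiii(x,\paramiii_0)$ in the second display, which hinges entirely on the specific construction of $r_\fiii^z$; everything downstream is mechanical linearity-and-independence bookkeeping. A minor subtlety worth flagging is that the argument uses the fact that the covariance expression in Theorem~\ref{thm:wide_nns_td} is independent of the rewards $r$, so the variance of $Q$ coincides with that of the reward-free random piece $\Phi[Q;\param_0]$; this makes step (iii) possible despite the two settings having different effective reward signals.
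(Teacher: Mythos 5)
Your proposal is correct and follows essentially the same route as the paper's proof: the paper substitutes the synthetic reward into the post-convergence formula, obtains the same cancellation in the form $\err^\infty = A\,\err^0$ with $\mathrm{Cov}[\err^0]=2K$ (your $\Phi[\fii;\paramii_0]-\Phi[\fiii;\paramiii_0]$ is exactly the first block row of this identity), and then reads off $\mathbb{E}[\smallfrac{1}{2}\err^2]=(AKA^\top)_{11}=\mathbb{V}_{\param_0}[Q(x,\param_\infty)]$ using the reward-independence of the covariance in Theorem~\ref{thm:wide_nns_td}. The only difference is presentational (a reward-free functional $\Phi$ versus the paper's stacked block-matrix notation), so no substantive comparison is needed.
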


Proof is given in Appendix~\ref{app:posterrordist}. This result provides the central theoretical justification for UVU: in the limit of infinite width, our measure of uncertainty, the expected squared prediction error between the online and target network, is mathematically equivalent to the variance one would obtain by training an ensemble of universal $Q$-functions. 

In practice, we are moreover interested in the behavior of finite estimators, that is, ensemble variances are estimated with a finite number of models. We furthermore implement UVU with a number of multiple independent heads $\fii_i$ and $\fiii_i$ with shared hidden layers. Corollary~\ref{thm:disteq} shows that the distribution of the sample mean squared prediction error from $M$ heads is identical to the distribution of the sample variance of an ensemble of $M+1$ independently trained universal $Q$-functions.

\begin{restatable}{crl}{disteq}\label{thm:disteq}
    Under the conditions of Theorem~\ref{thm:wide_nns_td}, consider online and target networks with $M$ independent heads $\fii_i, \fiii_i$, $i=1,\dots,M$, each trained to convergence with errors $\err_i(x,\paramii_\infty,\paramiii_0)$. Let $\smallfrac{1}{2} \bar{\err}(x, \paramii_\infty,\paramiii_0)^2 = \smallfrac{1}{2M} \sum_{i=1}^{M} \err_i(x,\paramii_\infty,\paramiii_0)^2$ be the sample mean squared prediction error over $M$ heads. Moreover, consider $M+1$ independent converged Q-functions $Q_i(x;\param_\infty)$ and denote their sample variance $\bar{\sigma}_Q^2 (x, \param_\infty) = \smallfrac{1}{M}\sum_{i=1}^{M+1} ( Q_i(x;\param_\infty) - \bar{Q}(x;\param_\infty) )^2$, where $\bar{Q}$ is the sample mean. The two estimators are identically distributed according to a scaled Chi-squared distribution
    \begin{align}
        \smallfrac{1}{2} \bar{\err}(x, \paramii_\infty,\paramiii_0)^2 \overset{D}{=} \bar{\sigma}_Q^2(x, \param_\infty), \quad \bar{\sigma}_Q^2(x, \param_\infty) \sim \smallfrac{\sigma_Q^2}{M} \chi^2(M),
    \end{align}
     with $M$ degrees of freedom and $\sigma_Q^2(x,\param_\infty) = \mathbb{V}_{\param_0}[Q(x,\param_\infty)]$ is the analytical variance of converged Q-functions given by Theorem~\ref{thm:wide_nns_td}.
\end{restatable}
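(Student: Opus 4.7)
The plan is to reduce both sides of the claimed distributional equality to i.i.d.\ sums of squares of centered Gaussians with a common per-sample variance $\sigma_Q^2$, whereupon classical $\chi^2$ theory produces the same scaled $\chi^2(M)$ law on either side. The main obstacle I anticipate is the algebraic manipulation that extracts a clean form of $\err_i(x)$ from the synthetic-reward construction via Eq.~\eqref{eq:postfunc}; once that is in hand, the remainder is Gaussian bookkeeping plus Cochran's theorem.

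First I would apply Eq.~\eqref{eq:postfunc} to each online head $\fii_i$, which is trained with the synthetic reward $r_\fiii^z = \fiii_i(\datax,\paramiii_0) - \gamma\fiii_i(\datax',\paramiii_0)$. Defining the reward-free (``homogeneous'') convergence operator
\[
    G[f](x) \;:=\; f(x,\param_0) \;-\; \ntkk_{x\datax}\Delta_\datax^{-1}\bigl(f(\datax,\param_0) - \gamma f(\datax',\param_0)\bigr),
\]
the same identity Eq.~\eqref{eq:postfunc} applied to $\fiii_i$ with $r = 0$ gives $\ntkk_{x\datax}\Delta_\datax^{-1}r_\fiii^z = \fiii_i(x,\paramiii_0) - G[\fiii_i](x)$. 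Substituting back into the formula for $\fii_i(x,\paramii_\infty)$ and cancelling $\fiii_i(x,\paramiii_0)$ yields the clean decomposition
\[
    \err_i(x) \;=\; \fii_i(x,\paramii_\infty) - \fiii_i(x,\paramiii_0) \;=\; G[\fii_i](x) - G[\fiii_i](x).
\]

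Second, I would use Theorem~\ref{thm:wide_nns_td} with $r=0$ to identify the law of $G[f](x)$. Since the mean term in the theorem is $\ntkk_{\datax_T\datax}\Delta_\datax^{-1}r$, this reward-free operator is a zero-mean Gaussian whose variance equals exactly $\sigma_Q^2(x,\param_\infty)$ (the covariance formula is $r$-independent). Because $\paramii_0$ and $\paramiii_0$ are drawn i.i.d., $G[\fii_i](x)$ and $G[\fiii_i](x)$ are independent $\mathcal{N}(0,\sigma_Q^2)$ random variables, and hence $\err_i(x) \sim \mathcal{N}(0, 2\sigma_Q^2)$. The $M$ independent output heads (with i.i.d.\ last-layer weights) make $\err_1(x),\dots,\err_M(x)$ mutually independent in the infinite-width limit, since under standard NTK parameterization the shared backbone contributes vanishing cross-head covariance. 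Thus $\sum_{i=1}^M \err_i(x)^2/(2\sigma_Q^2) \sim \chi^2(M)$, which rearranges to $\smallfrac{1}{2}\bar{\err}(x,\paramii_\infty,\paramiii_0)^2 \sim \frac{\sigma_Q^2}{M}\chi^2(M)$.

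Finally, I would invoke Theorem~\ref{thm:wide_nns_td} once more on the ensemble side: the $M+1$ independently initialized converged $Q$-functions $Q_i(x,\param_\infty)$ are i.i.d.\ Gaussian with variance $\sigma_Q^2$, so Cochran's theorem applied to a Gaussian sample of size $n=M+1$ gives $\bar\sigma_Q^2(x,\param_\infty) \sim \frac{\sigma_Q^2}{M}\chi^2(M)$ with exactly $M$ degrees of freedom (Bessel-corrected sample variance divided by $\sigma_Q^2$). Matching the two scaled $\chi^2$ laws establishes the distributional equivalence. The delicate points in the argument are the $r$-independent variance identification for $G[f](x)$ and, if made fully rigorous, the vanishing of cross-head covariances in the infinite-width NTK limit; everything else is a routine Gaussian/$\chi^2$ computation.
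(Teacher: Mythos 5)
Your proposal is correct and follows essentially the same route as the paper: reduce each head's converged error to a zero-mean Gaussian with variance $2\sigma_Q^2$ (your $G[\fii_i](x)-G[\fiii_i](x)$ decomposition is the paper's $\err^\infty = A\,\err^0$ identity in different notation), invoke cross-head independence in the infinite-width limit, and finish with standard $\chi^2$/Cochran bookkeeping on both sides. The only substantive difference is that the paper supports the cross-head independence you yourself flag as the delicate step with explicit NNGP/NTK propagation lemmas showing the off-diagonal kernels $\nngpk_{ij}$ and $\ntkk_{ij}$ vanish for $i\neq j$ despite the shared backbone.
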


Proof is provided in Appendix~\ref{app:finitesample}. The distributional equivalence of these finite sample estimators provides theoretical motivation for using a multi headed architecture with shared hidden layers within a single UVU model and its use as an estimator for ensemble variances of universal $Q$-functions. While the assumptions of infinite width and gradient flow are theoretical idealizations, several empirical results suggest that insights from the NTK regime can translate well to practical finite width deep learning models \citep{leeWideNeuralNetworks2020, liu2020linearity, tsilivis2022can}, motivating further empirical investigation in Section~\ref{sec:experiments}. 

\section{Empirical Analysis} \label{sec:experiments}

\begin{figure}[t]
    \centering
    \begin{subfigure}{0.3\textwidth}
        \centering
        \includegraphics[width=\columnwidth]{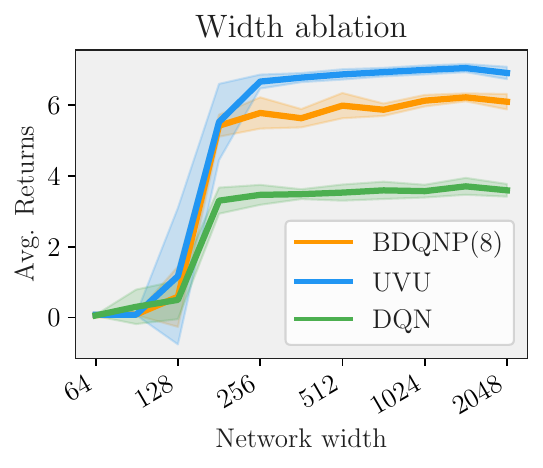}
    \end{subfigure}
    \hspace{6mm}
    \begin{subfigure}{0.3\textwidth}
        \centering
        \includegraphics[width=\columnwidth]{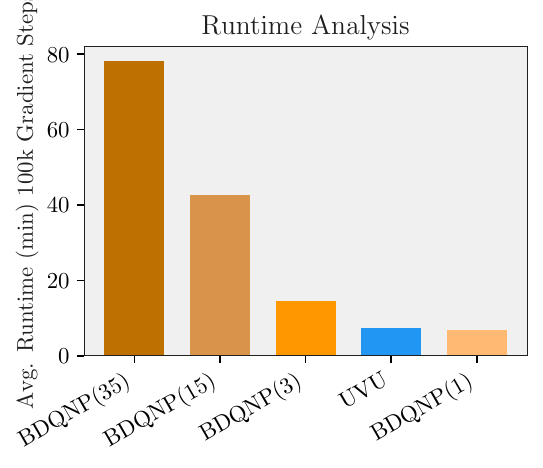}
    \end{subfigure}
    \hspace{6mm}
    \begin{subfigure}[b]{0.15\textwidth}
        \centering
        \includegraphics[width=\columnwidth]{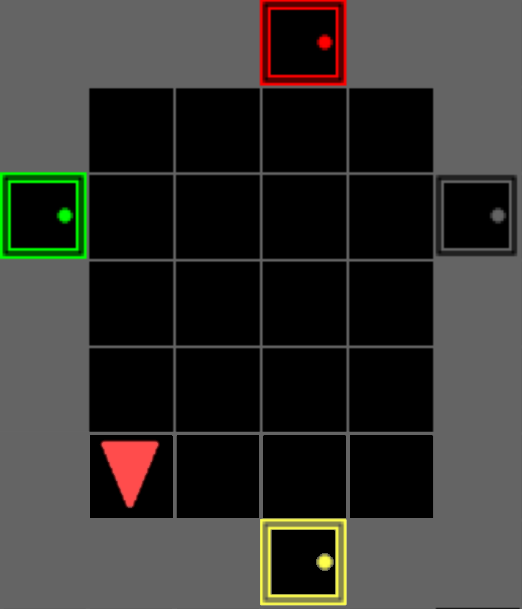}
        \vspace{4mm}
    \end{subfigure}
    \vspace*{-36mm}  
    \begin{center}\footnotesize
    \hspace{-28mm}(a)\hspace{44mm}(b)\hspace{46mm}(c)
    \end{center}
    \vspace*{28mm}
    \caption{(a) Ablation on \texttt{GoToDoor}-$10$  with different network widths. Shaded region indicates standard deviations over 5 seeds. (b) Runtime of various ensemble sizes vs. UVU. Ensembles are implemented with \texttt{vmap} in JAX\citep{jax2018github}. (c) Illustration of the \texttt{GoToDoor} environment. The agent (red triangle) must navigate to the door indicated by the task specification $z$.}
    \label{fig:gotodoor}
    \vspace{-4mm}
\end{figure}

Our empirical analysis is designed to assess whether UVU can effectively quantify value function uncertainty in practical settings, comparing its performance against established baselines, particularly deep ensembles. Specifically, we aim to address the following questions:
\vspace{-2mm}
\begin{enumerate}
    \item Does the theoretical motivation for UVU hold in practice and do its uncertainty estimates enable effective decision-making comparable to deep ensembles?
    \item How are uncertainty estimates generated by UVU affected by deviations from our theoretical analysis, namely finite network width? 
\end{enumerate}
\vspace{-2mm}
To address these questions, we focus on an offline multitask RL setting with incomplete data where reliable uncertainty estimation is crucial to attain high performance.

\subsection{Experimental Setup}

In our experimental analysis, we use an offline variant of the \texttt{GoToDoor} environment from the Minigrid benchmark suite \citep{MinigridMiniworld23}. An example view is shown in Figure~\ref{fig:gotodoor} (c). In this task, the agent navigates a grid world containing four doors of different colors, placed at random locations and receives a task specification $z$ indicating a target door color. Upon opening the correct door, the agent receives a reward and is placed in a random different location. Episodes are of fixed length and feature a randomly generated grid layout and random door positions / colors. In our experiments, we use variations of different difficulties by increasing maximum grid sizes.
\vspace{-2mm}
\paragraph{Dataset Collection. } A dataset $\mathcal{D}=\{ (s_i,a_i,r_i,z_i,s'_i,)\}_{i=1}^{N_D}$ is collected using a policy that performs expertly but systematically fails for certain task/grid combinations (e.g., it can not successfully open doors on the ``north'' wall, irrespective of color or grid layout). Policies seeking to improve upon the behavior policy thus ought to deviate from the dataset, inducing value uncertainty. 
\vspace{-2mm}
\paragraph{Task Rejection Protocol.}
All baselines implement a DQN-based agent trained in an offline fashion on $\mathcal{D}$. As the agents aim to learn an optimal policy for all grids and tasks contained in $\mathcal{D}$, the resulting greedy policy tends to deviate from the available data when the collecting policy is suboptimal. We employ a task-rejection protocol to quantify an agent's ability to recognize this divergence and the associated value uncertainty. As most task/grid combinations are contained in $\mathcal{D}$, though with varying levels of policy expertise, myopic uncertainty is not sufficient for fulfilling this task. Specifically, upon encountering the initial state $s_0$, the agent is given opportunity to reject a fixed selection of tasks (here door colors). It is subsequently given one of the remaining, non-rejected tasks and performance is measured by the average return achieved on the attempted task. Successful agents must thus either possess uncertainty estimates reliable enough to consistently reject tasks associated with a data/policy mismatch or rely on out-of-distribution generalization. Similar protocols, known as accuracy rejection curves, have been used widely in the supervised learning literature\citep{nadeem2009accuracy}. 

\subsection{Results}
We conduct experiments according to the above protocol and perform a quantitative evaluation of UVU and several baseline algorithms. All agents are trained offline and use the basic DQN architecture \citep{mnihPlayingAtariDeep2013} adapted for universal value functions, taking the task encoding $z$ as an additional input to the state (details are provided in Appendix~\ref{app:exp}). Specifically, we compare UVU against several baselines: A DQN baseline with random task rejection (DQN); Bootstrapped DQN with randomized priors (BDQNP) \citep{osbandDeepExplorationRandomized2019}; 
A DQN adaptation of random network distillation (DQN-RND) \citep{burda2018exploration} and a version adapted with the uncertainty prior mechanism proposed by \citet{zanger2024diverse} (DQN-RND-P). Except for the DQN baseline, all algorithms reject tasks based on the highest uncertainty estimate, given the initial state $s_0$ and action $a_0$, which is chosen greedily by the agent.

\begin{table}[t]
  \caption{Results of offline multitask RL with task rejection on different variations of the \texttt{GoToDoor} environment. Results are average evaluation returns of the best-performing policy over $10^5$ gradient steps and intervals are $90\%$ student's $t$ confidence intervals.}
  \label{tab:gotodoor}
  \fontsize{7pt}{7pt}\selectfont
  \centering
    \begin{tabular}{lrrrrrrr}
        \toprule
        \textbf{Size} & \textbf{DQN} & \textbf{BDQNP(3)}  & \textbf{BDQNP(15)} & \textbf{BDQNP(35)} & \textbf{DQN-RND} & \textbf{DQN-RND-P} & \textbf{UVU (Ours)} \\
        \midrule
        $5$ & $5.50\pm .15$ & $8.69\pm .24$ & $\bm{10.50\pm .04}$ & $\bm{10.58\pm .03}$ &$3.94\pm .50$ & $\bm{10.41\pm .12}$ & $\bm{10.54\pm .03}$ \\
        $6$ & $4.93\pm .12$ & $7.66\pm .09$ & $9.39\pm .04$ & $\bm{9.57\pm .04}$ & $1.99\pm .40$ & $9.28\pm .12$ & $\bm{9.54\pm .03}$ \\
        $7$ & $4.58\pm .09$ & $6.61\pm .16$ & $8.49\pm .05$ & $\bm{8.75\pm .06}$ & $2.66\pm .43$ & $8.12\pm .23$ & $\bm{8.73\pm .04}$ \\
        $8$ & $4.06\pm .12$ & $5.91\pm .10$ & $7.68\pm .05$ & $7.92\pm .05$ & $2.53\pm .54$ & $7.40\pm .14$ & $\bm{8.03\pm .04}$ \\
        $9$ & $3.66\pm .09$ & $5.04\pm .08$ & $6.69\pm .07$ & $7.03\pm .13$ & $2.39\pm .38$ & $6.39\pm .19$ & $\bm{7.29\pm .10}$ \\
        $10$ & $3.39\pm .11$ & $4.64\pm .14$ & $6.09\pm .13$ & $\bm{6.53\pm .16}$ & $2.25\pm .48$ & $5.64\pm .17$ & $\bm{6.72\pm .12}$ \\
        \bottomrule
    \end{tabular}
\end{table}

Table~\ref{tab:gotodoor} shows the average return achieved by each method on the GoToDoor experiment across different maximum grid sizes, with average runtimes displayed in Fig.~\ref{fig:gotodoor} (b). This result addresses our first research question regarding the practical effectiveness of UVU compared to ensembles and other baseline methods. As shown, the standard DQN baseline performs significantly worse than uncertainty-based algorithms, indicating that learned $Q$-functions do not generalize sufficiently to counterbalance inadequate uncertainty estimation. Both small and large ensembles significantly improve performance by leveraging uncertainty to reject tasks and policies associated with missing data. RND-based agents perform well when intrinsic reward priors are used. Our approach scores highly and outperforms many of the tested baselines with statistical significance, indicating that it is indeed able to effectively quantify value uncertainty using a single-model multi-headed architecture. 

We furthermore ablate UVU's dependency on network width, given that our theoretical analysis is situated in the infinite width limit. Fig.~\ref{fig:gotodoor} (a) shows that UVU's performance scales similarly with network width to DQN and BDQNP baselines, indicating that finite-sized networks, provided appropriate representational capacity, are sufficient for effective uncertainty estimates.

\section{Related Work}
A body of literature considers the quantification of value function uncertainty in the context of exploration.
Early works \citep{deardenBayesianQLearning, engel2005reinforcement} consider Bayesian adoptions of model-free RL algorithms with many modern approaches relying on efficient samplers \citep{russo2019worst, ishfaq2021randomized, ishfaq2024more}. Several recent works moreover provide analyses of the Bayesian model-free setting and correct applications thereof \citep{fellowsBayesianBellmanOperators2021, schmitt2023exploration, van2025epistemic}, which is a subject of debate due to the use TD losses. Several works furthermore derive provably efficient model-free algorithms using frequentist upper bounds on values in tabular \citep{strehl2006pac, jin2018q} and linear settings \citep{jin2020provably}. Similarly, \citet{yang2020provably} derive provably optimisic bounds of value functions in the NTK regime, but in contrast to our work uses local bonuses to obtain these. The exact relationship between bounds derived from local bonuses and the functional variance in ensemble or Bayesian settings remains open. 

The widespread use and empirical success of ensembles for uncertainty quantification in deep learning \citep{dietterich2000ensemble, lakshminarayananSimpleScalablePredictive2017} has motivated several directions of research towards a better theoretical understanding of their behavior. Following seminal works by \citet{jacotNeuralTangentKernel2020} and \citet{leeWideNeuralNetworks2020} who characterize NN learning dynamics in the NTK regime, a number of works have connected deep ensembles to Bayesian interpretations \citep{heBayesianDeepEnsembles2020a, dangelo2021repulsive}. Moreover, a number of papers have studied the learning dynamics of model-free RL: in the overparametrized linear settings \citep{xiao2021understanding}; in neural settings for single \citep{cai2019neural} and multiple layers \citep{wai2020provably}; to analyze generalization behavior \citep{lyle2022learning} with linear and second-order approximations. It should be noted that the aforementioned do not focus on probabilistic descriptions of posterior distributions in the NTK regime. In contrast, our work provides probabilistic closed-form solutions for this setting with semi-gradient TD learning. 

In practice, the use of deep ensembles is common in RL, with applications ranging from  efficient exploration \citep{osbandDeepExplorationBootstrapped2016b, chen2017ucb, osbandDeepExplorationRandomized2019, nikolovInformationDirectedExplorationDeep2019, ishfaq2024provable, zanger2024diverse} to off-policy or offline RL \citep{an2021uncertainty, chenRandomizedEnsembledDouble2021, lee2021sunrise} and conservative or safe RL \citep{lutjens2019safe, lee2022offline, hoelEnsembleQuantileNetworks2021}. Single model methods that aim to reduce the computational burden of ensemble methods typically operate as myopic uncertainty estimators \citep{burda2018exploration, pathakCuriosityDrivenExplorationSelfSupervised2017, lahlou2021deup, zanger2025contextual} and require additional propagation mechanisms \citep{odonoghueUncertaintyBellmanEquation2018, janzSuccessorUncertaintiesExploration2019, zhou2020deep, luis2023model}. 

\section{Limitations and Discussion}\label{sec:discussion}

In this work, we introduced universal value-function uncertainties (UVU), an efficient single-model method for uncertainty quantification in value functions. Our method measures uncertainties as prediction error between a fixed, random target network and an online learner trained with a temporal difference (TD) loss. This induces prediction errors that reflect long-term, policy-dependent uncertainty rather than myopic novelty. One of our core contributions is a thorough theoretical analysis of this approach via neural tangent kernel theory, which, in the limit of infinite network width, establishes an equivalence between UVU errors and the variance of ensembles of universal value functions. Empirically, UVU achieves performance comparable and sometimes superior to sizeable deep ensembles and other baselines in challenging offline task-rejection settings, while offering substantial computational savings. 

We believe our work opens up several avenues for future research: Although our NTK analysis provides a strong theoretical backing, it relies on idealized assumptions, notably the limit of infinite network width (a thoroughgoing exposition of our approximations is provided in Appendix~\ref{app:approximations}). Our experiments suggest UVU's performance is robust in practical finite-width regimes (Figure~\ref{fig:gotodoor}), yet bridging this gap between theory and practice remains an area for future work. On a related note, analysis in the NTK regime typically eludes feature learning. Combinations of UVU with representation learning approaches such as self-predictive auxiliary losses \citep{schwarzer2020data, guo2022byol, fujimoto2023sale} are, in our view, a very promising avenue for highly challenging exploration problems. Furthermore, while our approach estimates uncertainty for given policies, it does not devise a method for obtaining diverse policies and encodings thereof. We thus believe algorithms from the unsupervised RL literature\citep{touati2021learning, zheng2023contrastive} naturally integrate with our approach. In conclusion, we believe UVU provides a strong foundation for future developments in uncertainty-aware agents that are both capable and computationally feasible.

\clearpage
\bibliographystyle{abbrvnat}
\bibliography{main}

\clearpage
\appendix

\section{Theoretical Results}
This section provides proofs and further theoretical results for universal value-function uncertainties (UVU).

\subsection{Learning Dynamics of UVU} \label{app:tddynamics}
We begin by deriving learning dynamics for general functions with temporal difference (TD) losses and gradient descent, before analyzing the post training distribution of deep ensembles and prediction errors of UVU. 
\subsubsection{Linearized Learning Dynamics with Temporal Difference Losses}
We analyze the learning dynamics of a function trained using semi-gradient temporal difference (TD) losses on a fixed dataset of transitions $\datax, \datax'$. Let $f(x,\param_t)$ denote a NN of interest with depth $L$ and widths $n_1, \dots, n_{L-1} = n$.
\begin{prp}\label{thm:tddynamics}
    In the limit of infinite width $n \xrightarrow{} \infty$ and infinite time $t \xrightarrow{} \infty$, the function $f(x,\param_t)$ converges to 
    \begin{align}
        f(x,\param_\infty) &= 
        f(x,\param_0) - \ntkk_{x \datax} \left( \ntkk_{\datax \datax} - \gamma \ntkk_{\datax' \datax} \right)^{-1} \left( f(\datax,\param_0 ) - ( \gamma f(\datax',\param_0) + r ) \right),
    \end{align}
    where $\ntkk_{xx'}$ is the neural tangent kernel of $f$.
\end{prp}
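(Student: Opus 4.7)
The plan is to exploit the NTK limit to linearize $f$ around its initialization, reduce semi-gradient TD learning to a closed linear ODE in function space, and then integrate. The starting point is the standard NTK result that with NTK parametrization and gradient flow, as $n \to \infty$, the Jacobian $\nabla_\param f(\cdot,\param_t)$ stays equal to its value at initialization for all $t$ and the empirical kernel $\ntkk_t(x,x') = \nabla_\param f(x,\param_t)^\top \nabla_\param f(x',\param_t)$ converges to the deterministic, time-independent kernel $\ntkk(x,x')$. Hence, up to a vanishing remainder, the network is fully described by its linearization $f(x,\param_t) \approx f(x,\param_0) + \nabla_\param f(x,\param_0)^\top (\param_t - \param_0)$.

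First I would unpack the semi-gradient update. The stop-gradient on the bootstrap target means $\nabla_\param \mathcal{L}(\param_t) = -\nabla_\param f(\datax,\param_t)^\top \delta_t$, with the TD residual $\delta_t := \gamma f(\datax',\param_t) + r - f(\datax,\param_t)$. By the chain rule and the kernel identity above, gradient flow then induces functional dynamics
\begin{equation*}
    \smallfrac{\text{d}}{\text{d}t} f(x,\param_t) = \alpha\, \ntkk_{x\datax}\, \delta_t
\end{equation*}
for any point $x$, where $\ntkk_{x\datax}$ is frozen at its infinite-width value. Applying this identity at the anchor points $\datax$ and $\datax'$ and differentiating the definition of $\delta_t$ gives a closed linear ODE in the residual,
\begin{equation*}
    \smallfrac{\text{d}}{\text{d}t} \delta_t = -\alpha\bigl(\ntkk_{\datax\datax} - \gamma \ntkk_{\datax'\datax}\bigr)\delta_t,
\end{equation*}
whose solution is a matrix exponential $\delta_t = \exp\!\bigl(-\alpha(\ntkk_{\datax\datax} - \gamma \ntkk_{\datax'\datax})\,t\bigr)\delta_0$ with initial condition $\delta_0 = \gamma f(\datax',\param_0) + r - f(\datax,\param_0)$.

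Next I would take $t \to \infty$. Assuming $\ntkk_{\datax\datax} - \gamma \ntkk_{\datax'\datax}$ has eigenvalues with strictly positive real parts, $\delta_t \to 0$ and the integral $\int_0^\infty \delta_s\,\text{d}s$ converges to $\alpha^{-1}(\ntkk_{\datax\datax} - \gamma \ntkk_{\datax'\datax})^{-1}\delta_0$. Substituting this into the integrated form of the functional ODE for a general test point $x$ yields
\begin{equation*}
    f(x,\param_\infty) = f(x,\param_0) + \ntkk_{x\datax}(\ntkk_{\datax\datax} - \gamma \ntkk_{\datax'\datax})^{-1}\delta_0,
\end{equation*}
which after substituting $\delta_0$ is exactly the claimed expression.

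The main obstacle is the spectral assumption that makes the ODE stable: unlike the supervised case, where $\ntkk_{\datax\datax}$ is a symmetric positive definite Gram matrix and contraction is automatic, $\ntkk_{\datax\datax} - \gamma \ntkk_{\datax'\datax}$ is generally non-symmetric and its eigenvalues need not be real. I would address this by arguing that for $\gamma < 1$ and full-rank $\ntkk_{\datax\datax}$ the perturbation $\gamma \ntkk_{\datax'\datax}$ is spectrally dominated, so that $\operatorname{Re}\sigma(\ntkk_{\datax\datax} - \gamma \ntkk_{\datax'\datax}) > 0$, or alternatively invoke a contraction argument for the TD operator in a weighted norm induced by the NTK. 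A secondary technicality is controlling the uniformity of the NTK approximation over $t \in [0,\infty)$; the standard \citet{leeWideNeuralNetworks2020}-style argument for wide networks carries over once the dynamics are shown to be non-expansive, so the linearization error can be pushed to zero by first taking $n \to \infty$ and then $t \to \infty$.
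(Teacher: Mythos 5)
Your proposal follows essentially the same route as the paper's proof: linearize in the NTK limit, close a linear ODE on the TD residual $\delta_t$, solve it with a matrix exponential, and integrate to $t\to\infty$ under a stability assumption on $\ntkk_{\datax\datax}-\gamma\ntkk_{\datax'\datax}$. The one caveat is your suggested justification of that spectral condition — that $\gamma<1$ together with full-rank $\ntkk_{\datax\datax}$ forces $\operatorname{Re}\,\sigma(\ntkk_{\datax\datax}-\gamma\ntkk_{\datax'\datax})>0$ — which does not hold in general; the paper instead treats positive definiteness of this matrix as an explicit assumption, noting it depends on the interplay between the NTK and the MDP transitions and can genuinely fail (reflecting real TD divergence), and offers only sufficient conditions such as diagonal dominance via gradient normalization.
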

\begin{proof}
We begin by linearizing the function $f$ around its initialization parameters $\param_0$:
\begin{equation}
\flin(x,\param_t) = f(x,\param_0) + \nabla_\param f(x,\param_0)^\top (\param_t - \param_0).
\end{equation}
We assume gradient descent updates with infinitesimal step size and a learning rate $\alpha$ on the loss 
\begin{align}
    \mathcal{L}(\param_t) &= \smallfrac{1}{2}\|\,\gamma \flin(\datax',\param_t)_{\text{sg}} + r - \flin(\datax,\param_t)\,\|^2_2,
\end{align}
yielding the parameter evolution
\begin{align}
    \smalldv{t} \param_t &= -\alpha \nabla_\param \mathcal{L}(\param_t).
\end{align}
Setting $w_t = \param_t - \param_0$ and find the learning dynamics:
\begin{align} \label{eq:tdparamdynamics}
    \smalldv{t} w_t &= - \alpha \nabla_\param f(\datax,\param_0)\left(\flin(\datax,\param_t) - (\gamma \flin(\datax',\param_t) + r ) \right).
\end{align}
Thus, the evolution of the linearized function is given by 
\begin{align} \label{eq:flindynamics}
    \smalldv{t} \flin (x,\param_t) &= - \alpha \nabla_\param f(x,\param_0)^\top \nabla_\param f(\datax,\param_0)\left( \flin(\datax,\param_t) - (\gamma \flin(\datax',\param_t) + r)  \right).
\end{align}
Letting $ \delta_{\text{TD}}(\param_t) = \flin(\datax,\param_t) - (\gamma \flin(\datax',\param_t) + r)$, we obtain the differential equation
\begin{align}
    \smalldv{t} \delta_{\text{TD}}(\param_t)
    &= - \alpha \left( \ntkk^{t_0}_{\datax \datax} - \gamma \ntkk^{t_0}_{\datax' \datax} \right) \delta_{\text{TD}}(\param_t),
\end{align}
where $\ntkk^{t_0}_{xx'} = \nabla_{\param} f(x, \param_0)^\top \nabla_{\param} f(x', \param_0)$ is the (empirical) tangent kernel of $\flin(x,\param_t)$. Since the linearization $\flin(x,\param_t)$ has constant gradients $\nabla_{\param} f(x, \param_0)$, the above differential equation is linear and solvable so long as the matrix $\ntkk^{t_0}_{\datax \datax} - \gamma \ntkk^{t_0}_{\datax' \datax}$ is positive definite. With an exponential ansatz, we obtain the solution
\begin{align} \label{eq:tderrorde}
    \delta_{\text{TD}}(\param_t) = e^{-\alpha t \left( \ntkk^{t_0}_{\datax \datax} - \gamma \ntkk^{t_0}_{\datax' \datax} \right)} \delta_{\text{TD}}(\param_0),
\end{align} 
where $e^X$ is a matrix exponential. Reintegrating yields the explicit evolution of predictions
\begin{align} 
    \flin(x,\param_t) &= f(x,\param_0) + \int_0^t \dv{t'} \flin (x,\param_{t'}) \dd t' \\
    &= f(x,\param_0) - \ntkk^{t_0}_{x \datax} \left( \ntkk^{t_0}_{\datax \datax} - \gamma \ntkk^{t_0}_{\datax' \datax} \right)^{-1} \left(e^{-\alpha t( \ntkk^{t_0}_{\datax \datax} - \gamma \ntkk^{t_0}_{\datax' \datax})}-I \right) \delta_{\text{TD}}(\param_0). \label{eq:flinevolution}
\end{align}
\citet{jacotNeuralTangentKernel2020} show that in the limit of infinite layer widths of the neural network, the NTK $\ntkk^{t_0}_{xx'}$ becomes deterministic and constant $\ntkk^{t_0}_{xx'} \xrightarrow{} \ntkk_{xx'}$. As a consequence, the linear approximation $\flin(x;\param_t)$ becomes exact w.r.t. the original function $\lim_{\text{width}\xrightarrow{} \infty}\flin(x;\param_t)=f(x,\param_t)$ \citep{leeWideNeuralNetworks2020}. 

\end{proof}

\paragraph{Remark on the constancy of the NTK in TD learning.} We note here, that our proof assumed the results by \citet{jacotNeuralTangentKernel2020} to hold for the case of semi-gradient TD updates, namely that the NTK becomes deterministic and constant $\ntkk^{t_0}_{xx'} \xrightarrow{} \ntkk_{xx'}$ in the limit of infinite width under the here shown dynamics. First, the determinacy of the NTK at initialization follows from the law of large numbers and applies in our case equally as in the least squares case. The constancy of the NTK throughout training is established by Theorem 2 in \citet{jacotNeuralTangentKernel2020}, which we restate informally below.

\begin{restatable}{thm}{ntkconstancy}[\citet{jacotNeuralTangentKernel2020}]\label{thm:ntkconstancy}
    In the limit of infinite layer widths $n\to\infty$ and $n_1, \dots, n_{L} =n$, the kernel $\ntkk_{xx'}^{t_0}$ converges uniformly on the interval $t \in [0,T]$ to the constant neural tangent kernel 
    \begin{align*}
        \ntkk_{xx'}^{t_0} \to \ntkk_{xx'} \,,
    \end{align*}
    provided that the integral $\int_0^T \|d_t\|_{2} \,dt$ stays bounded. Here, $d_t \in \reals^{N_D}$ is the training direction of the parameter evolution such that $\smalldv{t} \param_t = - \alpha \nabla_\param f(\datax,\theta)d_t$
\end{restatable}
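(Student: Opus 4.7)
The result is a restatement of Theorem~2 of \citet{jacotNeuralTangentKernel2020}, so the plan is to invoke their argument and verify that its hypothesis on $\int_0^T \|d_t\|_2\,dt$ is met in the semi-gradient TD setting. The original proof has two pillars that both carry over unchanged: (i) at initialization, a layer-by-layer central limit theorem under NTK parameterization shows that $\ntkk^{t_0}_{xx'}$ at $t=0$ concentrates on a deterministic kernel $\ntkk_{xx'}$ as $n\to\infty$; (ii) during training, the $1/\sqrt{n}$ scaling inherent to NTK parameterization causes each weight to move by only $O(1/\sqrt{n})$, so the gradient features $\nabla_\param f(x,\param_t)$, and thus the empirical kernel $\ntkk^{t_0}_{xx'}$ itself, change by a vanishing amount. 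Crucially, pillar (ii) relies on $d_t$ only through the magnitude $\|d_t\|_2$ appearing in $\smalldv{t}\param_t = -\alpha \nabla_\param f(\datax,\param_t)d_t$; the semi-gradient origin of $d_t$ plays no role in the concentration arguments.

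What remains is to identify $d_t$ for TD learning and verify the bounded-integral hypothesis. From the derivation leading to Eq.~\ref{eq:tdparamdynamics}, the training direction is $d_t = \delta_{\text{TD}}(\param_t)=\flin(\datax,\param_t)-\gamma\flin(\datax',\param_t)-r$, and Eq.~\ref{eq:tderrorde} yields the closed form $\delta_{\text{TD}}(\param_t)=e^{-\alpha t(\ntkk^{t_0}_{\datax\datax}-\gamma\ntkk^{t_0}_{\datax'\datax})}\delta_{\text{TD}}(\param_0)$. Hence $\|d_t\|_2\le \|e^{-\alpha t(\ntkk^{t_0}_{\datax\datax}-\gamma\ntkk^{t_0}_{\datax'\datax})}\|_{\text{op}}\,\|\delta_{\text{TD}}(\param_0)\|_2$. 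Provided the TD operator $\ntkk_{\datax\datax}-\gamma\ntkk_{\datax'\datax}$ has spectrum with strictly positive real parts, the matrix exponential decays and the integral is bounded uniformly in $T$, discharging Jacot et al.'s hypothesis and yielding $\ntkk^{t_0}_{xx'}\to\ntkk_{xx'}$ uniformly on $[0,T]$.

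The main obstacle is a mild circularity: the decay estimate above already uses properties of the limiting kernel that we are trying to establish stays constant. The standard way around this is a bootstrap --- establish near-constancy on a short interval $[0,\tau]$ where continuity of $\ntkk^{t_0}$ in $t$ alone suffices to bound $\|d_t\|_2$, then iterate to extend the horizon. A more substantive caveat is that, unlike the symmetric Gram matrix in the least-squares case, $\ntkk_{\datax\datax}-\gamma\ntkk_{\datax'\datax}$ is non-symmetric and can fail to be stable for off-policy data --- the classical deadly-triad phenomenon. A clean statement therefore carries an implicit regularity assumption, typically sufficiently on-policy sampling of $\datax'$, ensuring stability of the TD operator and hence applicability of the NTK constancy result.
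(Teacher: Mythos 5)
Your proposal takes essentially the same route as the paper: restate Theorem~2 of \citet{jacotNeuralTangentKernel2020}, identify the TD training direction $d_t=\delta_{\text{TD}}(\param_t)$, and discharge the bounded-integral hypothesis via the exponential decay $\|d_t\|_2\le\|d_0\|_2 e^{-t\lambda_{\min}}$ under positive definiteness of $\ntkk^{t_0}_{\datax\datax}-\gamma\ntkk^{t_0}_{\datax'\datax}$, with the same caveat that this stability condition is harder to guarantee than in the least-squares case. Your additional observation about the circularity (the decay bound is derived along the linearized dynamics whose kernel is constant by construction, whereas the hypothesis concerns the actual dynamics) and the bootstrap fix is a legitimate refinement that the paper's remark glosses over, but it does not change the argument's structure.
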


In the here studied case of semi-gradient TD learning, the parameter evolution (as outlined above in Eq.~\eqref{eq:tdparamdynamics}) is described by the gradient $\nabla_\param f(\datax,\param_0)$ and the \emph{training direction} $d_t$ according to 
\begin{align}
    \smalldv{t} \param_t &= - \alpha \nabla_\param f(\datax,\param_0) \underbrace{\left(\flin(\datax,\param_t) - (\gamma \flin(\datax',\param_t) + r ) \right)}_{d_t} \,,
\end{align}
where the training direction is given by $d_t = \flin(\datax,\param_t) - (\gamma \flin(\datax',\param_t) + r) = \delta_{TD}(\theta_t)$. Provided that the matrix $\ntkk^{t_0}_{\datax \datax} - \gamma \ntkk^{t_0}_{\datax' \datax}$ is positive definite, the norm of the training direction $\|d_t\|_2$ decays exponentially by Eq.~\ref{eq:tderrorde}. This implies 
\begin{align}
    \|d_t\|_2 < \|d_0\|_2 e^{-t \lambda_{\text{min}}} \,,
\end{align}
where $\lambda_{\text{min}}$ is the smallest eigenvalue of $\ntkk^{t_0}_{\datax \datax} - \gamma \ntkk^{t_0}_{\datax' \datax}$. Assuming $\ntkk^{t_0}_{\datax \datax} - \gamma \ntkk^{t_0}_{\datax' \datax}$ is positive definite, $\lambda_{\text{min}}$ is positive and as a consequence, we have
\begin{align}
    \int_0^\infty \| d_t\|_{2} \, dt < \int_0^\infty \|d_0\|_2 e^{-t \lambda_{\text{min}}} \, dt < \infty \,,
\end{align}
bounding the required integral of Theorem~\ref{thm:ntkconstancy} for any $T$ and establishing $\ntkk^{t_0}_{xx'} \xrightarrow{} \ntkk_{xx'}$ uniformly on the interval $[0,\infty)$ (see Theorem 2 in \citet{jacotNeuralTangentKernel2020} for detailed proof for the last statement). 

We note, however, that the condition for $\ntkk^{t_0}_{\datax \datax} - \gamma \ntkk^{t_0}_{\datax' \datax}$ to be positive definite is, for any $\discount>0$, stronger than in the classical results for supervised learning with least squares regression. While $\ntkk_{\datax\datax}$ can be guaranteed to be positive definite for example by restricting $\datax$ to lie on a unit-sphere, $x_i \in \datax$ to be unique, and by assuming non-polynomial nonlinearities in the neural network (so as to prevent rank decay in the network expressivity), the condition is harder to satisfy in the TD learning setting. Here, the eigenspectrum of $\ntkk^{t_0}_{\datax \datax} - \gamma \ntkk^{t_0}_{\datax' \datax}$ tends to depend on the transitions $\datax \to \datax'$ themselves and thus is both dependent on the discount $\discount$ as well as the interplay between gradient structures of the NTK and the MDP dynamics. 

We note here, that this is not primarily a limitation of applying NTK theory to TD learning, but is reflected in practical experience: TD learning can, especially in offline settings, indeed be instable and diverge. Instability of this form is thus inherent to the learning algorithm rather than an artifact of our theoretical treatment. Informally, one approach towards guaranteeing positive definiteness of $\ntkk^{t_0}_{\datax \datax} - \gamma \ntkk^{t_0}_{\datax' \datax}$ is by enforcing diagonal dominance, appealing to the Gershgorin circle theorem \citep{gerschgorin31}. For a matrix $A=[a_{ij}]$, every real eigenvalue $\lambda$ must lie in 
\begin{align}
    a_{ii} - R_i \leq \lambda \leq a_{ii}+R_i \,,
\end{align}
where $R_i=\sum_{i\neq j} |a_{ij}|$ is the sum of off-diagonal elements of a row $i$. In other words, a lower bound on the smallest real eigenvalue can be increased by increasing diagonal entries $a_{ii}$ while decreasing off-diagonal elements $a_{ij}$. In the TD learning setting, this translates to gradient conditioning, e.g., by ensuring $\| \nabla_\param f(x,\param)\|_2 = \| \nabla_\param f(x',\param)\|_2 = C$ for any pair $x,x'$, guaranteeing cross-similarities to be smaller than self-similarities. Indeed several recent works pursue similar strategies to stabilize offline TD learning \citep{yue2023understanding, gallici2024simplifying} and rely on architectural elements like layer normalization \citep{ba2016layer} to shape gradient norms.

\subsubsection{Post Training Function Distribution with Temporal Difference Dynamics}
We now aim to establish the distribution of post-training functions $f(x,t_\infty)$ when initial parameters $\param_0$ are drawn randomly i.i.d. For the remainder of this section, we will assume the infinite width limit, s.t. $\flin(x,\param_\infty) = f(x,\param_\infty)$ and $\ntkk^{t_0}_{xx'} = \ntkk_{xx'}$. The post-training function $f(x,\param_\infty)$ is given by 
\begin{align} \label{eq:postf}
    f(x,\param_\infty) &= f(x,\param_0) - \ntkk_{x \datax} \left( \ntkk^{t_0}_{\datax \datax} - \gamma \ntkk^{t_0}_{\datax' \datax} \right)^{-1} \left( f(\datax,\param_0) - (\gamma f(\datax',\param_0) + r ) \right),
\end{align}
and is thus a deterministic function of the initialization $\param_0$. 

\widennstd*

\begin{proof}
We begin by introducing a column vector of post-training function evaluations on a set of test points $\datax_T$, and the training data $\datax$ and $\datax'$. Moreover, we introduce the shorthand 
\begin{align}
    \Delta_\datax = \ntkk_{\datax\datax} - \gamma \ntkk_{\datax' \datax}, 
\end{align} 
and similarly $\Delta_{\datax'} = \ntkk_{\datax\datax'} - \gamma \ntkk_{\datax' \datax'}$. The vector can then be compactly described in block matrix notation by 
\begin{align} \label{eq:blockeq}
    \underbrace{
    \begin{pmatrix}
        f(\datax_T,\param_\infty) \\
        f(\datax,\param_\infty) \\
        f(\datax',\param_\infty)
    \end{pmatrix} 
    }_{f^\infty}
    &=
    \underbrace{
    \begin{pmatrix}
        I & -\ntkk_{\datax_T \datax} \Delta_\datax^{-1} & \gamma \ntkk_{\datax_T \datax} \Delta_\datax^{-1} \\
        I & -\ntkk_{\datax \datax} \Delta_\datax^{-1} & \gamma \ntkk_{\datax \datax} \Delta_\datax^{-1} \\
        I & -\ntkk_{\datax' \datax} \Delta_\datax^{-1} & \gamma \ntkk_{\datax' \datax} \Delta_\datax^{-1}
    \end{pmatrix} 
    }_{A}
    \underbrace{
    \begin{pmatrix}
        f(\datax_T,\param_0) \\
        f(\datax,\param_0) \\
        f(\datax',\param_0)
    \end{pmatrix}
    }_{f^0}
    +
    \underbrace{
    \begin{pmatrix}
        \ntkk_{\datax_T \datax} \Delta_\datax^{-1} r\\
        \ntkk_{\datax \datax} \Delta_\datax^{-1} r\\
        \ntkk_{\datax' \datax} \Delta_\datax^{-1} r
    \end{pmatrix}
    }_{b} \,.
\end{align}
\citet{lee2017deep} show that neural networks with random Gaussian initialization $\param_0$ (including NTK parametrization) are described by the neural network Gaussian process (NNGP) $f(\datax_T, \param_0) \sim \mathcal{N}(0, \nngpk_{\datax_T  \datax_T} )$ with $\nngpk_{\datax_T  \datax_T} = \mathbb{E}[f(\datax_T, \param_0) f(\datax_T, \param_0)^\top] $. By extension, the initializations $f^0$ are jointly Gaussian with zero mean and covariance matrix 
\begin{align}
\mathrm{Cov} \bigl[ f^0 \bigr]
&= 
\underbrace{
\begin{pmatrix}
    \nngpk_{\datax_T \datax_T} & \nngpk_{\datax_T \datax} & \nngpk_{\datax_T \datax'} \\
    \nngpk_{\datax \datax_T} & \nngpk_{\datax \datax} & \nngpk_{\datax \datax'} \\
    \nngpk_{\datax' \datax_T} & \nngpk_{\datax' \datax} & \nngpk_{\datax' \datax'}
\end{pmatrix}
}_{K}.
\end{align}
As the post-training function evaluations $f^\infty$ given in Eq.~\eqref{eq:blockeq} are affine transformations of the multivariate Gaussian random variables $f^0 \sim \mathcal{N}(0, K)$, they themselves are multivariate Gaussian with distribution $f^\infty \sim \mathcal{N}(b, AKA^\top)$. 

We are content with obtaining an expression for the distribution of $f(\datax_T, \param_\infty)$ and thus in the following focus on the top-left entry of the block matrix $(AKA^\top)_{11}$. For notational brevity, we introduce the following shorthand notations
\begin{align}
\Lambda_{\tilde{\datax}} = \nngpk_{\datax \tilde{\datax}} - \gamma \nngpk_{\datax' \tilde{\datax}} \,
\end{align}
After some rearranging, one obtains the following expression for the covariance $\text{Cov}(f_{\datax_T}^\infty)$ 
\begin{align*}
    \mathrm{Cov}_{\param_0} \bigl[ f(\datax_T,\param_\infty) \bigr] \!&=\! \nngpk_{\datax_T \datax_T} \!-\! (\ntkk_{\datax_T \datax} \Delta_\datax^{-1} \Lambda_{\datax_T} \!+\! h.c.) \!+\! (\ntkk_{\datax_T \datax} \Delta^{-1}_\datax (\Lambda_\datax \!-\! \gamma \Lambda_{\datax'}) \Delta_\datax^{-1^\top} \ntkk_{\datax \datax_T})\,.
\end{align*}

\end{proof}

\subsubsection{Distribution of UVU Predictive Errors} \label{app:posterrordist}
We now aim to find an analytical description of the predictive errors as generated by our approach. For this, let $\fii(x,\paramii_t)$ denote the predictive (online) network and $\fiii(x;\paramiii_0)$ the fixed target network. We furthermore denote $\err(x,\paramii_t, \paramiii_0) = \fii(x,\paramii_t)-\fiii(x,\paramiii_0)$ the prediction error between online and target network. 

\predictionerror*
\begin{proof}
Since our algorithm uses semi-gradient TD losses to train $\fii(x,\paramii_t)$, the linearized dynamics of Theorem~\eqref{thm:wide_nns_td} apply. However, we consider a fixed target network $\fiii(x;\paramiii_0)$ to produce synthetic rewards according to
\begin{align}
    r_g = \fiii(x,\paramiii_0) - \gamma \fiii(x',\paramiii_0).
\end{align}
With the post training function as described by Eq.~\ref{eq:postf}, the post-training prediction error in a query point $x$ for this reward is given by
\begin{align} \nonumber
& \fii(x,\paramii_\infty) - \fiii(x,\paramiii_0) = \\  \label{eq:uvuposttrainerror}
& \qquad \fii(x,\paramii_0) - \fiii(x, \paramiii_0) 
- \ntkk_{x\datax} \Delta_\datax^{-1} (\fii(\datax,\paramii_0) - (\discount \fii(\datax',\paramii_0) + \fiii(\datax,\paramiii_0) - \discount \fiii(\datax',\paramiii_0) ) ). 
\end{align}
We again use the shorthand $\err^t = ( \err(\datax_T,\paramii_t, \paramiii_0), \err(\datax,\paramii_t, \paramiii_0), \err(\datax',\paramii_t, \paramiii_0) )^\top$ and reusing the block matrix $A$ from Eq.~\ref{eq:blockeq}, we can write
\begin{align}
\err^\infty= A \err^0.
\end{align}
By assumption, $\fii(x, \paramii_0)$ and $\fiii(x, \paramiii_0)$ are architecturally equivalent and initialized i.i.d., and $\err^0$ is simply the sum of two independent Gaussian vectors with covariance $\mathrm{Cov}[\err^0] = 2K$. We conclude that prediction errors $\err^\infty$ are Gaussian with distribution $\err^\infty \sim \mathcal{N}(0, 2AKA^\top)$. Taking the diagonal of the covariance matrix $AKA^\top_{11}$, we obtain
\begin{align}
    \mathbb{E}_{\paramii_0, \paramiii_0} \bigl[ \smallfrac{1}{2} \err(x,\paramii_\infty,\paramiii_0)^2 \bigr] &= \mathbb{V}_{\param_0}\bigl[ Q(x,\param_\infty) \bigr], 
\end{align}
where 
\begin{align}
    \mathbb{V}_{\param_0}\bigl[ Q(x,\param_\infty) \bigr] \!&=\! \nngpk_{xx} \!-\! (\ntkk_{x \datax} \Delta_\datax^{-1} \Lambda_{x} \!+\! h.c.) \!+\! (\ntkk_{x \datax} \Delta^{-1}_\datax (\Lambda_\datax \!-\! \gamma \Lambda_{\datax'}) \Delta_\datax^{-1^\top} \ntkk_{\datax x})\,.
\end{align}    

\end{proof}

\subsection{Multiheaded UVU}
We now show results concerning the equivalence of multiheaded UVU prediction errors and finite ensembles of Q-functions. We first outline proofs for two results by \citet{lee2017deep} and \citet{jacotNeuralTangentKernel2020}, which rely on in our analysis. 
\subsubsection{Neural Network Gaussian Process Propagation and Independence}
Consider a deep neural network $f$ with $L$ layers. Let $z_i^l(x)$ denote the $i$-th output of layer $l = 1, \dots, L$, defined recursively as:
\begin{align}
    z_i^l(x) &= \sigma_b b_i^l + \smallfrac{\sigma_w}{\sqrt{n_{l-1}}}\sum_{j=1}^{n_{l-1}} w_{ij}^l x_j^l(x), \quad x_j^l(x)=\phi(z_j^{l-1}(x)),
\end{align}
where $n_l$ is the width of layer $l$ with $n_0 = n_{\text{in}}$ and $x^0 = x$. Further, $\sigma_w$ and $\sigma_b$ are constant variance multipliers, weights $w^l$ and biases $b^l$ are initialized i.i.d. with $\mathcal{N}(0,1)$, and $\phi$ is a Lipschitz-continuous nonlinearity. The $i$-th function output $f_i(x)$ of the NN is then given by $f_i(x)=z_i^{L}(x)$.

\begin{prp}[\citet{lee2017deep}]\label{thm:nngpprop}%
At initialization and in the limit $n_1\dots,n_{L-1} \xrightarrow{} \infty$, the $i$-th output at layer $l$, $z_i^l(x)$, converges to a Gaussian process with zero mean and covariance function $\nngpk_{ii}^l$ given by
\begin{align}
    \nngpk_{ii}^1(x,x') &= \smallfrac{\sigma_w^2}{n_0}x^\top x' + \sigma_b^2,\quad\text{and}\quad k_{ij}^1=0,\quad i\neq j. \\
    \nngpk_{ii}^{l}(x,x') &= \sigma_b^2 + \sigma_w^2 \mathbb{E}_{z_i^{l-1}\sim \mathcal{GP}(0,\nngpk_{ii}^{l-1})}[\nonlinearity(z_i^{l-1}(x))\nonlinearity(z_i^{l-1}(x'))]. \\
\end{align}
and
\begin{align}
    \nngpk_{ij}^{l}(x,x') &= \mathbb{E}[z_i^{l}(x) z_j^{l}(x')] = 
    \begin{cases}
        \nngpk^{l}(x,x') & \quad \text{if } i=j, \\
    0 &\quad \text{if } i\neq j.
    \end{cases}
\end{align}
\end{prp}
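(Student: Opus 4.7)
The plan is to prove the claim by induction on the layer index $l$, combining exact Gaussianity at layer one with a conditional central limit theorem argument at each subsequent layer. For the base case $l=1$, the pre-activation $z_i^1(x) = \sigma_b b_i^1 + \frac{\sigma_w}{\sqrt{n_0}} \sum_j w_{ij}^1 x_j$ is an exact affine combination of the i.i.d. standard Gaussian parameters $b_i^1, w_{ij}^1$. Hence any finite collection $\{z_i^1(x^{(k)})\}_{i,k}$ is jointly Gaussian with zero mean; the covariances follow from bilinearity, giving $\nngpk_{ii}^1(x,x') = \sigma_b^2 + \frac{\sigma_w^2}{n_0} x^\top x'$ on the diagonal, while for $i \neq j$ the parameter vectors $(w_{ij}^1)_j, b_i^1$ and $(w_{i'j}^1)_j, b_{i'}^1$ are independent, yielding $\nngpk_{ij}^1 = 0$.

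For the inductive step, assume the claim holds at layer $l-1$, so that different output indices $z_j^{l-1}$ are independent GPs with kernel $\nngpk^{l-1}$. I would condition on the $\sigma$-algebra $\mathcal{F}_{l-1}$ generated by the parameters of layers $1,\dots,l-1$, under which the activations $\nonlinearity(z_j^{l-1}(x))$ are deterministic. Since the $w_{ij}^l$ and $b_i^l$ are independent of $\mathcal{F}_{l-1}$ and i.i.d.\ Gaussian, the conditional law of $z_i^l(x) = \sigma_b b_i^l + \frac{\sigma_w}{\sqrt{n_{l-1}}} \sum_j w_{ij}^l \nonlinearity(z_j^{l-1}(x))$ is exactly Gaussian with conditional variance $\sigma_b^2 + \frac{\sigma_w^2}{n_{l-1}} \sum_j \nonlinearity(z_j^{l-1}(x))^2$. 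Taking $n_{l-1} \to \infty$, the induction hypothesis makes $\nonlinearity(z_j^{l-1}(\cdot))$ i.i.d.\ across $j$, so the law of large numbers turns the empirical cross-moment $\frac{1}{n_{l-1}} \sum_j \nonlinearity(z_j^{l-1}(x))\nonlinearity(z_j^{l-1}(x'))$ into $\mathbb{E}_{z \sim \mathcal{GP}(0,\nngpk^{l-1})}[\nonlinearity(z(x))\nonlinearity(z(x'))]$, producing the claimed kernel recursion. Independence across $i \neq j$ follows by the same conditioning argument: the weight rows $(w_{ij'}^l)_{j'}$ and $(w_{i'j'}^l)_{j'}$ are independent for $i \neq i'$, so $z_i^l$ and $z_{i'}^l$ are conditionally independent centered Gaussians, which combined with joint Gaussianity in the limit promotes to unconditional independence and $\nngpk_{ij}^l = 0$.

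The main obstacle is controlling the joint convergence in distribution rigorously rather than formally. One must establish the CLT simultaneously over a finite collection of input points and output indices, which I would handle via the Cramér–Wold device, reducing to a one-dimensional CLT for arbitrary linear combinations of the $z_i^l(x^{(k)})$. Verifying a Lindeberg condition for these sums requires uniform moment bounds on $\nonlinearity(z_j^{l-1}(x))$, which are inherited from the Lipschitz assumption on $\nonlinearity$ and the subgaussian tails of the induction hypothesis on $z_j^{l-1}$. A secondary technical point is justifying the exchange of the $n_{l-1} \to \infty$ limit with the expectation defining $\nngpk^l$; uniform integrability, again following from the Lipschitz bound $|\nonlinearity(z)| \leq C(1+|z|)$ and Gaussian moments of the previous layer, is the cleanest route. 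Everything else is bookkeeping on the recursion.
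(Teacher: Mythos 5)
Your proposal is correct and follows the same overall structure as the paper's proof: induction on the layer index, with the base case handled by exact affine-Gaussianity of $z_i^1$ and head-independence at every layer coming from the independence of the rows of $w^l, b^l$. The one substantive difference is in how you obtain Gaussianity in the inductive step. The paper argues unconditionally: since the post-activations $x_j^l(x)=\nonlinearity(z_j^{l-1}(x))$ are independent across $j$ under the induction hypothesis, $z_i^l(x)$ is a normalized sum of independent terms and the CLT gives joint Gaussianity of $\{z_i^l(x), z_i^l(x')\}$ with the stated covariance recursion. You instead condition on the parameters of layers $1,\dots,l-1$, under which $z_i^l$ is \emph{exactly} Gaussian with a random conditional covariance, and then use the law of large numbers to show that covariance converges to the deterministic kernel $\nngpk^l$; this trades the CLT for an LLN plus an argument that conditionally-Gaussian-with-convergent-covariance implies convergence to a Gaussian, and it makes the Cram\'er--Wold/Lindeberg bookkeeping you mention partly redundant (you need it only if you stay on the unconditional route). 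Both arguments share the same implicit idealization, which neither you nor the paper addresses: the induction hypothesis describes the $n_1,\dots,n_{l-2}\to\infty$ limit, so treating the $z_j^{l-1}$ as exactly i.i.d.\ GPs while sending $n_{l-1}\to\infty$ is a sequential-limit argument; this is standard for this result and acceptable at the paper's level of rigor, but worth being aware of.
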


\begin{proof} %
The proof is done by induction. The induction assumption is that if outputs at layer $l-1$ satisfy a GP structure
\begin{align}
    z_i^{l-1} \sim \mathcal{GP}(0,\nngpk_{ii}^{l-1}),
\end{align}
with the covariance function defined as
\begin{align}
    \nngpk_{ii}^{l-1}(x,x') &= \mathbb{E}[z_i^{l-1}(x) z_i^{l-1}(x')] = k_{jj}^{l-1}(x,x'), \quad \forall i,j, \\
    \nngpk_{ij}^{l-1}(x,x') &= \mathbb{E}[z_i^{l-1}(x) z_j^{l-1}(x')] = 0, \quad \text{for } i\neq j,
\end{align}
then, outputs at layer $l$ follow
\begin{align}
    z_i^{l}(x)\sim \mathcal{GP}(0,\nngpk_{ii}^{l}),
\end{align}
where the kernel at layer $l$ is given by:
\begin{align}
    \nngpk_{ii}^{l}(x,x') &= \mathbb{E}[z_i^{l}(x) z_i^{l}(x')] = \nngpk_{jj}^{l}(x,x'), \quad \forall i,j, \\
    \nngpk_{ij}^{l}(x,x') &= \mathbb{E}[z_i^{l}(x) z_j^{l}(x')] = 0,\quad \text{if } i\neq j.
\end{align}
with the recursive definition 
\begin{align}
   \nngpk_{ii}^{l}(x,x') = \sigma_b^2 + \sigma_w^2 \mathbb{E}_{z_i^{l-1}\sim \mathcal{GP}(0,k_{ii}^{l-1})}[\nonlinearity(z_i^{l-1}(x))\nonlinearity(z_i^{l-1}(x'))].
\end{align}
\emph{Base case $(l=1)$}. At layer $l=1$ we have:
\begin{align}
    z_i^1(x)=\smallfrac{\sigma_w}{\sqrt{n_0}}\sum_{j=1}^{n_0} w_{ij}^1 x_j + \sigma_b b_i^1.
\end{align}
This is an affine transform of Gaussian random variables; thus, $z_i^1(x)$ is Gaussian distributed with
\begin{align}
    z_i^1(x)\sim \mathcal{GP}(0,\nngpk_{ii}^1),    
\end{align}
with kernel
\begin{align}
    \nngpk_{ii}^1(x,x')=\smallfrac{\sigma_w^2}{n_0}x^\top x' + \sigma_b^2,\quad\text{and}\quad \nngpk_{ij}^1=0,\quad i\neq j.
\end{align}

\emph{Induction step $l>1$.} For layers $l>1$ we have
\begin{align}
z_i^{l}(x)=\sigma_b b_i^{l}+\smallfrac{\sigma_w}{\sqrt{n_{l-1}}}\sum_{j=1}^{n_{l-1}}w_{ij}^{l} x_j^{l}(x),
\quad x_j^{l}(x)=\phi(z_j^{l-1}(x)).
\end{align}
By the induction assumption, $z_j^{l-1}(x)$ are generated by independent Gaussian processes. Hence, $x_i^{l}(x)$ and $x_{j}^{l}(x)$ are independent for $i\neq j$. Consequently, $z_i^{l}(x)$ is a sum of independent random variables. By the Central Limit Theorem (as $n_1, \dots, n_{L-1}\rightarrow\infty$) the tuple $\{z_i^{l}(x),z_{i}^{l}(x')\}$ tends to be jointly Gaussian, with covariance given by:
\begin{align}
    \mathbb{E}[z_i^{l}(x) z_i^{l}(x')] 
    &= \sigma_b^2 + \sigma_w^2\mathbb{E}_{z_i^{l-1}\sim \mathcal{GP}(0,\nngpk_{ii}^{l-1})}[\nonlinearity(z_i^{l-1}(x))\nonlinearity(z_i^{l-1}(x'))].
\end{align}
Moreover, as $z_i^{l}$ and $z_j^{l}$ for $i\neq j$ are defined through independent rows of the parameters $w^{l}, b^{l}$ and independent pre-activations $x^{l}(x)$, we have
\begin{align}
    \nngpk_{ij}^{l} = \mathbb{E}[z_i^{l}(x) z_j^{l}(x')]=0,\quad i\neq j,  
\end{align}
completing the proof.
\end{proof}

\subsubsection{Neural Tangent Kernel Propagation and Independence}
We change notation slightly from the previous section to make the parametrization of $\f_i(x,\param^L)$ and $z_i^l(x;\param^l)$ explicit with
\begin{align}
    z_i^l(x,\param^l)&=\sigma_b b_i^l + \smallfrac{\sigma_w}{\sqrt{n_{l-1}}}\sum_{j=1}^{n_{l-1}}w_{ij}^l x_j^l(x),\quad x_j^l(x)=\nonlinearity(z_j^{l-1}(x;\param^{l-1})),
\end{align}
where $\param^l$ denotes the parameters $\{w^1,b^1, \dots, w^l, b^l\}$ up to layer $l$ and $f_i(x,\param^L) = z_i^L(x;\param^L)$. Let furthermore $\nonlinearity$ be a Lipschitz-continuous nonlinearity with derivative $\dot{\nonlinearity}(x) = \dv{x} \nonlinearity(x)$.

\begin{prp}[\citet{jacotNeuralTangentKernel2020}]\label{thm:ntkprop}%
In the limit $n_1\dots,n_{L-1} \xrightarrow{} \infty$, the neural tangent kernel $\ntkk^l_{ii}(x,x')$ of the $i$-th output $z_i^l(x, \param^l)$ at layer $l$, defined as the gradient inner product
\begin{align}
    \ntkk^l_{ii}(x,x') = \nabla_{\param^l} z_i^l(x,\param^l)^\top \nabla_{\param^l} z_i^l(x',\param^l),
\end{align}
is given recursively by 
\begin{align}
    \ntkk_{ii}^1(x,x') &= \nngpk_{ii}^1 (x,x') = \smallfrac{\sigma_w^2}{n_0}x^\top x' + \sigma_b^2,\quad\text{and}\quad \ntkk_{ij}^1(x,x')=0,\quad i\neq j. \\
    \ntkk_{ii}^{l}(x,x') &= \ntkk_{ii}^{l-1} (x,x') \dot{\nngpk}_{ii}^{l-1} (x,x') + \nngpk_{ii}^{l}(x,x'), \\
\end{align}
where
\begin{align}
    \dot{\nngpk}_{ii}^{l} (x,x') &= \sigma_w^2 \mathbb{E}_{z_i^{l-1}\sim \mathcal{GP}(0,\nngpk_{ii}^{l-1})}[\dot{\nonlinearity}(z_i^{l-1}(x))\dot{\nonlinearity}(z_i^{l-1}(x'))]
\end{align}
and
\begin{align}
    \ntkk_{ij}^{l}(x,x') &= \nabla_{\param^l} z_i^{l}(x,\param^{l})^\top \nabla_{\param^l} z_j^l(x',\param^l) = 
    \begin{cases}
        \ntkk^{l}(x,x') & \quad \text{if } i=j, \\
    0 &\quad \text{if } i\neq j.
    \end{cases}
\end{align}
\end{prp}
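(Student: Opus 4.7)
The plan is induction on the layer index $l$, closely mirroring the NNGP propagation argument of Proposition~\ref{thm:nngpprop}. The key inductive hypothesis carries two pieces of information jointly: (i) the per-channel NTK $\ntkk^{l-1}_{ii}$ is deterministic and independent of the index $i$, and (ii) the gradient inner products across distinct channels $\nabla_{\param^{l-1}} z_i^{l-1}(x,\param^{l-1})^\top \nabla_{\param^{l-1}} z_j^{l-1}(x',\param^{l-1})$ vanish for $i\neq j$. Without carrying (ii) alongside (i), the inductive step would not close.

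For the base case $l=1$, the only parameters feeding into $z_i^1$ are the row $w_{i\cdot}^1$ and the bias $b_i^1$; rows for different output indices share no parameters, so the off-diagonal kernel vanishes exactly, and the diagonal kernel reduces to $\sigma_b^2 + \tfrac{\sigma_w^2}{n_0}x^\top x'$, matching $\nngpk^1_{ii}$. For the inductive step, I would split the parameter set $\param^l$ into the fresh layer parameters $\{w_{ik}^l,b_i^l\}$ and the earlier parameters $\param^{l-1}$, and decompose $\ntkk^l_{ij}(x,x')$ into the corresponding two contributions. The fresh-layer contribution to the diagonal, using $\nabla_{w_{ik}^l} z_i^l = \tfrac{\sigma_w}{\sqrt{n_{l-1}}}\phi(z_k^{l-1})$ and $\nabla_{b_i^l} z_i^l = \sigma_b$, becomes
\begin{equation*}
\sigma_b^2 + \smallfrac{\sigma_w^2}{n_{l-1}}\smallsum{k=1}{n_{l-1}} \phi(z_k^{l-1}(x))\phi(z_k^{l-1}(x')) \xrightarrow{n_{l-1}\to\infty} \nngpk^l_{ii}(x,x')
\end{equation*}
by the law of large numbers together with the GP distribution of the $z_k^{l-1}$ from Proposition~\ref{thm:nngpprop}.

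For the earlier-layer contribution, expanding $\nabla_{\param^{l-1}} z_i^l = \tfrac{\sigma_w}{\sqrt{n_{l-1}}}\sum_k w_{ik}^l\, \dot{\phi}(z_k^{l-1})\, \nabla_{\param^{l-1}} z_k^{l-1}$ produces a double sum over $k,k'$. Here the inductive hypothesis (ii) is essential: in the limit, cross terms $k\neq k'$ are killed because $\nabla_{\param^{l-1}} z_k^{l-1\top}\nabla_{\param^{l-1}} z_{k'}^{l-1}\to 0$, leaving
\begin{equation*}
\smallfrac{\sigma_w^2}{n_{l-1}}\smallsum{k=1}{n_{l-1}} w_{ik}^l w_{ik}^l\, \dot{\phi}(z_k^{l-1}(x))\dot{\phi}(z_k^{l-1}(x'))\, \ntkk^{l-1}_{kk}(x,x').
\end{equation*}
Using (i) to pull the $k$-independent factor $\ntkk^{l-1}(x,x')$ out and applying LLN with $\mathbb{E}[(w_{ik}^l)^2]=1$ and the GP law for $z_k^{l-1}$ yields $\ntkk^{l-1}_{ii}(x,x')\,\dot{\nngpk}^{l-1}_{ii}(x,x')$, completing the diagonal recursion. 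For $i\neq j$ the fresh-layer contribution is identically zero since the involved weights/biases are disjoint, and the same $k=k'$ reduction of the earlier-layer contribution gives a sum of the form $\tfrac{\sigma_w^2}{n_{l-1}}\sum_k w_{ik}^l w_{jk}^l(\cdots)$, whose prefactor $w_{ik}^l w_{jk}^l$ has mean zero with independent $w_{ik}^l,w_{jk}^l$, so the LLN forces the limit to vanish and establishes $\ntkk^l_{ij}=0$.

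The main obstacle, as in the original NTK paper, is making the limit-exchange rigorous: one needs the joint convergence of the empirical sums with the random weights $w_{ik}^l$, and a careful treatment of the fact that $\nabla_{\param^{l-1}} z_k^{l-1}$ depends on the same pre-activations that appear inside $\dot{\phi}$. I would handle this by conditioning on $\param^{l-1}$, invoking the Lipschitz continuity of $\phi,\dot{\phi}$ to apply a conditional LLN for the fresh-layer weights, and then passing to the $n_{l-1}\to\infty$ limit for the pre-activation statistics using the inductive GP structure; this mirrors the standard argument in \citet{jacotNeuralTangentKernel2020} and needs no new ingredients beyond what is already established in Proposition~\ref{thm:nngpprop}.
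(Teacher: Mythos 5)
Your proposal is correct and follows essentially the same route as the paper's proof: induction on the layer index with the same two-part hypothesis (deterministic, index-independent diagonal kernel plus vanishing cross-channel gradient inner products), the same split of $\param^l$ into fresh last-layer parameters and $\param^{l-1}$, the law of large numbers for the fresh-layer term yielding $\nngpk^l_{ii}$, and the chain-rule expansion of the earlier-layer term whose cross terms $k\neq k'$ are killed by the inductive hypothesis and whose diagonal terms converge to $\ntkk^{l-1}\dot{\nngpk}$, with the $i\neq j$ case vanishing by independence of the weight rows. The only difference is cosmetic (you make the double sum over $k,k'$ explicit before collapsing it, where the paper writes the collapsed sum directly), so no further comparison is needed.
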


\begin{proof} %
We again proceed by induction. The induction assumption is that if gradients satisfy at layer $l-1$
\begin{align}
\ntkk_{ij}^{l-1}(x,x') = \nabla_{\param^{l-1}} z_i^{l-1}(x,\param^{l-1})^\top \nabla_{\param^{l-1}} z_j^{l-1}(x',\param^{l-1}) = 
    \begin{cases}
        \ntkk^{l-1}(x,x') & \quad \text{if } i=j, \\
    0 &\quad \text{if } i\neq j,
    \end{cases}
\end{align}
then at layer $l$ we have
\begin{align}
    \ntkk_{ii}^{l}(x,x')&= \ntkk_{ii}^{l-1}(x,x')\dot{\nngpk}_{ii}^{l}(x,x')+\nngpk_{ii}^{l}(x,x')
\end{align}
and
\begin{align}
    \ntkk_{ij}^{l}(x,x')&=\nabla_{\param^{l}} z_i^{l}(x,\param^{l})^\top \nabla_{\param^{l}} z_j^{l}(x',\param^{l}) = 0 \quad \text{if } i\neq j.
\end{align}
\emph{Base Case ($l=1$).} At layer $l=1$, we have
\begin{align}
z_i^1(x)&=\sigma_b b_i^1+\smallfrac{\sigma_w}{\sqrt{n_0}}\sum_j^{n_0} w_{ij}^1 x_j,
\end{align}
and the gradient inner product is given by:
\begin{align}
\nabla_{\param^1}z_i^1(x, \param^1)^\top\nabla_{\param^1}z_i^1(x',\param^1)=\smallfrac{\sigma_w^2}{n_0}x^\top x'+\sigma_b^2=\nngpk_{ii}^1(x,x').
\end{align}
\emph{Inductive Step ($l>1$).} For layers $l>1$, we split parameters $\param^{l}=\param^{l-1}\cup\{w^{l},b^{l}\}$ and split the inner product by
\begin{align}
\ntkk_{ii}^{l}(x,x')&= \underbrace{ \nabla_{\param^{l-1}}z_i^{l}(x,\param^{l})^\top\nabla_{\param^{l-1}}z_i^{l}(x',\param^{l}) }_{l.h.s}
+
\underbrace{\nabla_{ \{ w^{l},b^{l} \} }z_i^{l}(x,\param^{l})^\top\nabla_{ \{ w^{l},b^{l} \} }z_i^{l}(x',\param^{l})}_{r.h.s}.
\end{align}
Note that the $r.h.s$ involves gradients w.r.t. last-layer parameters, i.e. the post-activation outputs of the previous layer, and by the same arguments as in the NNGP derivation of Proposition~\ref{thm:nngpprop}, this is a sum of independent post activations s.t. in the limit $n_{l-1} \xrightarrow{} \infty$
\begin{align}
    \nabla_{\{ w^{l},b^l \}} z_i^{l}(x, \param^l)^\top\nabla_{\{ w^{l}, b^l \}} z_j^{l} (x', \param^l)&= \begin{cases}
        k_{ii}^{l}(x,x'), & \quad i=j, \\
        0, & \quad i\neq j.
    \end{cases}
\end{align}
For the $l.h.s.$, we first apply chain rule to obtain
\begin{align}
\nabla_{\param^{l-1}}z_i^{l}(x,\param^{l})=\smallfrac{\sigma_w}{\sqrt{n_{l-1}}}\sum_j^{n_{l-1}} w_{ij}^{l}\dot{\nonlinearity}(z_j^{l-1}(x,\param^{l-1}))\nabla_{\param^{l-1}}z_j^{l-1}(x,\param^{l-1}).
\end{align}
The gradient inner product of outputs $i$ and $j$ thus reduces to 
\begin{align} \nonumber
& \nabla_{\param^{l-1}}z_i^{l}(x,\param^{l})^\top \nabla_{\param^{l-1}}z_j^{l}(x',\param^{l}) = \\ 
& \qquad \smallfrac{\sigma_w^2}{n_{l-1}}\sum_k^{n_{l-1}} w_{ik}^{l} w_{jk}^{l}\dot{\nonlinearity}(z_k^{l-1}(x,\param^{l-1})) \dot{\nonlinearity}(z_k^{l-1}(x',\param^{l-1})) \ntkk_{kk}^{l-1}(x,x').
\end{align}
By the induction assumption $\ntkk_{kk}^{l-1}(x,x')=\ntkk^{l-1}(x,x')$ and again by the independence of the rows $w^l_i$ and $w_j^l$ for $i\neq j$, the above expression converges in the limit $n_{l-1} \xrightarrow{} \infty$ to an expectation with 
\begin{align}
    \ntkk_{ij}^{l}(x,x') = \begin{cases}
        \ntkk^{l-1}(x,x') \dot{\nngpk}_{ii}^{l}(x,x') + \nngpk_{ii}^{l}(x,x') & \quad i=j, \\
        0 &\quad i\neq j.
    \end{cases}
\end{align}
This completes the induction.
\end{proof}

\subsubsection{Multiheaded UVU: Finite Sample Analysis} \label{app:finitesample}
We now define multiheaded predictor with $M$ output heads $\fii_i(x, \paramii_t)$ for $i = 1, \dots, M$ and a fixed multiheaded target network $\fiii_i(x_t;\paramiii_0)$ of equivalent architecture as $\fii$ with the corresponding prediction error $\err_i(x,\paramii_t, \paramiii_0)$ accordingly. Let $\fii_i(x, \paramii_t)$ be trained such that each head runs the same algorithm as outlined in Section~\ref{sec:uvu} independently. 

\disteq*
 
\begin{proof}
By Collorary.~\ref{thm:prediction_error}, the prediction error of a single headed online and target network $\err(x,\paramii_t, \paramiii_0) = \fii(x,\paramii_t)-\fiii(x,\paramiii_0)$ converges in the limit $n_1\dots,n_{L-1} \xrightarrow{} \infty$ and $t \xrightarrow{} \infty$ to a Gaussian with zero mean and variance $\err(x,\paramii_\infty, \paramiii_0) \sim \mathcal{N}(0, 2 \sigma_Q^2)$ where
\begin{align} \label{eq:prederrorvar}
        \sigma_Q^2 = \mathbb{V}_{\param_0}\bigl[ Q(x,\param_\infty) \bigr] \!&=\! \nngpk_{xx} \!-\! (\ntkk_{x \datax} \Delta_\datax^{-1} \Lambda_{x} \!+\! h.c.) \!+\! (\ntkk_{x \datax} \Delta^{-1}_\datax (\Lambda_\datax \!-\! \gamma \Lambda_{\datax'}) \Delta_\datax^{-1^\top} \ntkk_{\datax x})\,.
\end{align}
By Propositions~\ref{thm:nngpprop} and ~\ref{thm:ntkprop}, the NNGP and NTK associated with each online head $\fii_i(x, \paramii_\infty)$ in the infinite width and time limit are given by 
\begin{align}
\nngpk_{ij}(x,x') &= \mathbb{E}[\fii_i(x, \paramii_\infty) \fii_j(x',\paramii_\infty)] = 
    \begin{cases}
        \nngpk(x,x') & \quad \text{if } i=j, \\
    0 &\quad \text{if } i\neq j,
    \end{cases} \\
\ntkk_{ij}(x,x') &= \nabla_{\paramii} \fii_i^{l}(x,\paramii_\infty)^\top \nabla_{\paramii} \fii_j^l(x',\paramii_\infty) = 
    \begin{cases}
        \ntkk(x,x') & \quad \text{if } i=j, \\
    0 &\quad \text{if } i\neq j.
    \end{cases}
\end{align}
Due to the independence of the NNGP and NTK for different heads $\fii_i$, prediction errors $\err_i(x_t;\paramii_\infty, \paramiii_0)$ are i.i.d. draws from a zero mean Gaussian with variance equal as given in Eq.~\ref{eq:prederrorvar}. Note that this is despite the final feature layer being shared between the output functions. The empirical mean squared prediction errors are thus Chi-squared distributed with $M$ degrees of freedom
\begin{align}
    \smallfrac{1}{M}\sum_{i=1}^M \smallfrac{1}{2} \err_i(x_t;\paramii_\infty, \paramiii_0)^2 \sim \smallfrac{ \sigma_Q^2}{M} \chi^2(M)
\end{align}
Now, let $\{Q_i(x;\param_t)\}_{i=1}^{M+1}$ be a deep ensemble of $M+1$ Q-functions from independent initializations. By Corollary~\ref{thm:prediction_error}, these Q-functions, too, are i.i.d. draws from a Gaussian, now with mean $\ntkk_{x\datax} \Delta_\datax^{-1}r$ and variance as given in Eq.~\ref{eq:prederrorvar}. The sample variance of this ensemble thus also follows a Chi-squared distribution with $M$ degrees of freedom 
\begin{align}
    \smallfrac{1}{M}\sum_{i=1}^{M+1} \smallfrac{1}{2} \left( Q_i(x;\param_\infty) - \bar{Q}(x;\param_\infty) \right)^2  \sim \smallfrac{\sigma_Q^2}{M} \chi^2(M),
\end{align}
where $\bar{Q}(x;\param_\infty)=\smallfrac{1}{M+1}\sum_i^{M+1}Q_i(x;\param_\infty)$ is the sample mean of $M+1$ universal Q-functions, completing the proof.
\end{proof}

\subsection{Limitations and Assumptions} \label{app:approximations}
In this section, we detail central theoretical underpinnings and idealizations upon which our theoretical analysis is built.

A central element of our theoretical analysis is the representation of neural network learning dynamics via the Neural Tangent Kernel (NTK), an object in the theoretical limit of infinite network width. The established NTK framework, where the kernel is deterministic despite random initialziation and and constant throughout training, typically applies to fully connected networks with NTK parameterization, optimized using a squared error loss \citep{jacotNeuralTangentKernel2020}. Our framework instead accommodates a semi-gradient TD loss, and thereby introduces an additional prerequisite for ensuring the convergence of these dynamics: the positive definiteness of the matrix expression $\ntkk_{\datax \datax} - \gamma \ntkk_{\datax' \datax}$. This particular constraint is more a characteristic inherent to the TD learning paradigm itself than a direct consequence of the infinite-width abstraction. Indeed, the design of neural network architectures that inherently satisfy such stability conditions for TD learning continues to be an active area of contemporary research \citep{yue2023understanding, gallici2024simplifying}. The modeling choice of semi-gradient TD losses moreover does not incorporate the use of target networks, where bootstrapped values do not only stop gradients but are generated by a separate network altogether that slowly moves towards the online learner. Our analysis moreover considers the setting of \emph{offline policy evaluation}, that is, we do not assume that additional data is acquired during learning and that policies evaluated for value learning remain constant. The assumption of a fixed, static dataset diverges from the conditions of online reinforcement learning with control, where the distribution of training data $(\datax, \datax')$ typically evolves as the agent interacts with its environment, both due to its collection of novel transitions and due to adjustments to the policy, for example by use of a Bellman optimality operator. Lastly, our theoretical model assumes, primarily for simplicity, that learning occurs under gradient flow with infinitesimally small step sizes and with updates derived from full-batch gradients. Both finite-sized gradient step sizes and stochastic minibatching has been treated in the literature, albeit not in the TD learning setting \citep{jacotNeuralTangentKernel2020, leeWideNeuralNetworks2020, liu2020linearity, yang2019scaling}. We believe our analysis could be extended to these settings without major modifications.

\section{Experimental Details} \label{app:exp}
We provide details on our experimental setup, implementations and additional results. This includes architectural design choices, algorithmic design choices, hyperparameter settings, hyperparameter search procedures, and environment details.

\subsection{Implementation Details} \label{app:impdets}
All algorithms are self-implemented and tuned in JAX \citep{jax2018github}. A detailed exposition of our design choices and parameters follows below.
\paragraph{Environment Setup.}
We use a variation of the GoToDoor environment of the minigrid suite \citep{MinigridMiniworld23}. As our focus is not on partially observable settings, we use fully observable $35$-dimensional state descriptions with $\mathcal{S} = \reals^{35}$. Observation vectors comprise the factors:
\begin{align}
    o = \left( o_{\text{agent-pos}}^\top, o_{\text{agent-dir}}^\top, o_{\text{door-config}}^\top, o_{\text{door-pos}}^\top \right)^\top,
\end{align}
where $o_{\text{agent-pos}} \in \reals^{2}$ is the agent position in $x,y$-coordinates, $o_{\text{agent-dir}} \in \reals$ is a scalar integer indicating the agent direction (takes on values between 1 and 4), $o_{\text{door-config}} \in \reals^{24}$ is the door configuration, comprising $4$ one-hot encoded vectors indicating each door's color, and $o_{\text{door-pos}} \in \reals^{8}$ is a vector containing the $x,y$-positions of the four doors. The action space is discrete and four-dimensional with the following effects 
\begin{align}
    a_{\text{effect}} = \begin{cases}
    \text{turn left} & \text{ if } a=0, \\ 
    \text{turn right} & \text{ if } a=1, \\ 
    \text{go forward} & \text{ if } a=2, \\
    \text{open door} & \text{ if } a=3. 
    \end{cases}
\end{align}
Tasks are one-hot encodings of the target door color, that is $z \in \reals^{6}$ and in the online setting are generated such that they are achievable. The reward function is an indicator function of the correct door being opened, in which case a reward of $1$ is given to the agent and the agent position is reset to a random location in the grid. Episodes terminate only upon reaching the maximum number of timesteps ($50$ in our experiments). 

In the task rejection setting described in our evaluation protocol, an agent in a start state $s_0$ is presented a list of tasks, which may or may not be attainable, and is allowed to reject a fixed number of tasks from this list. In our experiments, the agent is allowed to reject $4$ out of $6$ total tasks at the beginning of each episode. 

\paragraph{Data Collection.}

\begin{wrapfigure}{r}{0.45 \columnwidth}
    \vspace{-0mm}
    \begin{center} 
        \includegraphics[width=0.45 \columnwidth]{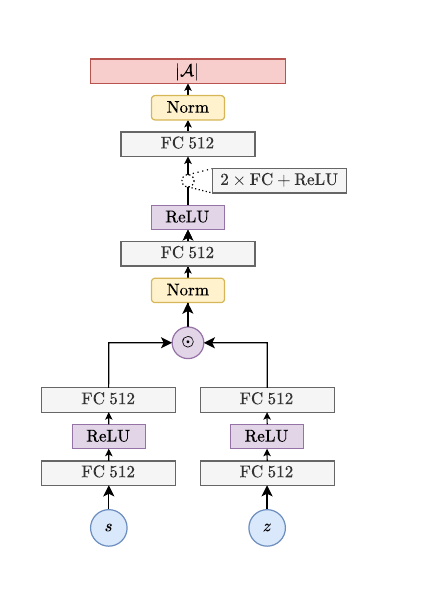}
    \end{center}
    \vspace{-4mm}
    \caption{Illustration of the used architecture. $\odot$ indicates elementwise multiplication.}
    \label{fig:arc}
    \vspace{0mm}
\end{wrapfigure}

Our offline datasets are recorded replay buffers from a DQN-agent deployed to the GoToDoor environment with an $\epsilon$-greedy exploration strategy and a particular policy: When the door indicated by the task encoding $z$ provided by the environment lies at the south or west wall, the regular policy by the online DQN agent is executed. If the target door lies at the north or east wall, however, actions are generated by a fixed random $Q$-network. This mixture policy emulates a policy that exhibits expert performance on certain combinations of tasks and states, but suboptimal behavior for other combinations. The replay buffer does, however, contain most combinations of states and tasks, albeit some with trajectories from suboptimal policies. Hyperparameter details of the online agent are provided in section~\ref{app:hpsettings}.

\paragraph{Algorithmic Details.} 

All tested algorithms and experiments are based on DQN agents\citep{mnihPlayingAtariDeep2013} which we adapted for the task-conditioned universal value function \citep{schaul2015universal} setting. While our theoretical analysis considers full-batch gradient descent, in practice we sample minibatches from offline datasets with $\datax_{mb} = \{ (s_i,a_i,z_i) \}_{i=1}^{N_{mb}}$, $\datax_b' = \{(s'_i,a'_i,z_i) \}_{i=1}^{N_b}$, where next-state actions are generated $a_i' = \text{argmax}_{a \in \mathcal{A}} Q(s_i', a, z_i, \theta_t)$ and rewards are $r = \{ r_i \}_{i=1}^{N_{mb}}$. Moreover, we deviate from our theoretical analysis and use target networks in place of the stop-gradient operation. Here, a separate set of parameters $\tilde{\param_t}$ is used to generate bootstrap targets in the TD loss which is in practice given by
\begin{align} \label{eq:lossimp}
    \mathcal{L}(\param_t) &= \smallfrac{1}{2}\|\, \gamma Q(\datax_{mb}',\tilde{\param_t}) + r - Q(\datax_{mb},\param_t)\,\|^2_2.
\end{align}

The parameters $\tilde{\param_t}$ are updated towards the online parameters $\param_t$ at fixed intervals through polyak updating, as is common. We use this basic algorithmic pipeline for all tested algorithms, including the online agent used for data collection. 
\paragraph{Architectural Details.} 
We use a hypernetwork MLP architecture adapted to the DQN setting, as depicted in Fig.~\ref{fig:arc}. Specifically, this means we pass states $s$ and task encodings $z$ through single-layer encoders, which are then joint by elementwise multiplication. The resulting vector is normalized by its $l^2$ norm, $x' = \smallfrac{x}{\| x\|_2}$. This joint vector is passed thorugh a $3$-layer MLP with network width $512$, again normalized by its $l^2$ norm and finally passed through a fully-connected layer to obtain a vector of dimension $\reals^{|\actionspace|}$. Although our experiments are conducted in the offline RL setting, preliminary experiments showed no benefits of using ensemble-based pessimism \citep{an2021uncertainty} or conservative $Q$-updates \citep{kumar2020conservative}. Instead, our normalization pipeline appears to sufficiently address overestimation issues as is suggested by several recent works \citep{yue2023understanding, gallici2024simplifying}.

\paragraph{Independent Bootstrapping.} For the ensemble-based BDQNP baseline and our UVU model, we perform independent bootstrapping in the TD loss computation. By this, we mean that both the bootstrapped value and actions are generated by individual $Q$-functions. In the case of BDQNP, this means we compute Loss~\ref{eq:lossimp} for each model $Q_k$, indexed by $k \in [1, \dots, K]$ with $\datax_{mb,k}=\datax_{mb}$ and bootstraps are generated as \begin{align}\label{eq:bootstrap}
    \datax_{mb,k}' = \{(s'_i,a'_{ik},z_i) \}_{i=1}^{N_{mb}}\,, \quad \text{and } \quad a_{ik}' = \text{argmax}_{a \in \mathcal{A}} Q_k(s_i', a, z_i, \theta_t) \,.
\end{align}
Note, that this procedure is established \citep{osbandDeepExplorationBootstrapped2016b} and serves the purpose of maintaining independence between the models in the ensemble. In order to conduct the same procedure in our UVU method, where we have access to only one $Q$-function, we generate $K$ distinct $Q$-estimates by computing 
\begin{align}
    Q^{UVU}_k(s,a,z,\param_t) := Q(s,a,z,\param_t) + \err_k(s,a,z,\paramii_t,\paramiii_0) \,,
\end{align}
that is, by adding the UVU error of the $k$-th output head. Bootstraps are then generated according to Eq.~\ref{eq:bootstrap}. 

\paragraph{Intrinsic reward priors.} Intrinsic reward priors are a trick suggested by \citet{zanger2024diverse} to address a shortcoming of propagation methods used for intrinsic reward methods like RND\citep{burda2018exploration, odonoghueUncertaintyBellmanEquation2018}. The issue is that while learning a $Q$-function with intrinsic rewards can, with the right choice of intrinsic reward, provide optimistic estimates of the value function, but only for state-action regions covered in the data. A potential underestiation of the optimistic bound, however, counteracts its intention, a phenomenon also described by \citet{whiteson2020optimistic}. Intrinsic reward priors are a heuristic method to address this issue by adding local, myopic uncertainty estimates automatically to the forward pass of the intrinsic $Q$-function, leading to a ``prior'' mechanism that ensures a 
\begin{align*}
    \hat{Q}_{intr}(s,a,z,\param_t) = {Q}_{intr}(s,a,z,\param_t) + \smallfrac{1}{2}\err_{rnd}(s,a,z,\param_{rnd})^2 \,
\end{align*}
where $\err_{rnd}(s,a,z,\param_{rnd})$ denotes a local RND error as an example. The altered function $\hat{Q}_{intr}(s,a,z,\param_t)$ is trained as usual with Loss~\ref{eq:lossimp} and intrinsic rewards $\smallfrac{1}{2}\err_{rnd}(s,a,z,\param_{rnd})^2$.

\subsection{Hyperparameter Settings} \label{app:hpsettings}
To ensure a consistent basis for comparison across our findings, all experimental work was carried out using a shared codebase. We adopted standardized modeling approaches, including uniform choices for elements like network architectures and optimization algorithms, where appropriate. Specifically, every experiment employed the same architecture as detailed in Appendix~\ref{app:impdets}. Key hyperparameters, encompassing both foundational and algorithm-specific settings, were tuned through a grid search on the $10\times10$ variation of the \textit{GoToDoor} environment. The search grid and final hyperparamters are provided in Tables~\ref{tab:hpsearchspace} and ~\ref{tab:hpsettings} respectively. DQN in Table~\ref{tab:hpsettings} refers to the online data collection agent. 

\begin{table}[p]
  \caption{Hyperparameter search space}
  \centering
  \begin{tabular}{lr}
    \toprule
    Hyperparameter     & Values \\
    \midrule
    $Q$ Learning rate (all)     & $[1 \cdot 10^{-6}, 3 \cdot 10^{-6}, 1 \cdot 10^{-5},$ \\ & $ 3 \cdot 10^{-5}, 1 \cdot 10^{-4}, 3 \cdot 10^{-4}, 1 \cdot 10^{-3}]$ \\
    Prior function scale (BDQNP) & $[0.1, 0.3, 1.0, 3.0, 10.0]$ \\
    RND Learning rate (RND, RND-P) & $[1 \cdot 10^{-6}, 3 \cdot 10^{-6}, 1 \cdot 10^{-5},$ \\ & $ 3 \cdot 10^{-5}, 1 \cdot 10^{-4}, 3 \cdot 10^{-4}, 1 \cdot 10^{-3}]$ \\
    UVU Learning rate (UVU) & $[1 \cdot 10^{-6}, 3 \cdot 10^{-6}, 1 \cdot 10^{-5},$ \\ & $ 3 \cdot 10^{-5}, 1 \cdot 10^{-4}, 3 \cdot 10^{-4}, 1 \cdot 10^{-3}]$ \\
    \bottomrule
  \end{tabular}
  \label{tab:hpsearchspace}
\end{table}

\begin{table}[p]
  \caption{Hyperparameter settings for \textit{GoToDoor} experiments.}
  \label{tab:hpsettings}
  \centering
  \begin{tabular}{lrrrrr}
    \toprule
    Hyperparameter & DQN & BDQNP & DQN-RND & DQN-RND+P & UVU \\
    \midrule
    Adam $Q$-Learning rate & $3 \cdot 10^{-4}$ & $3 \cdot 10^{-4}$ & $3 \cdot 10^{-4}$ & $3 \cdot 10^{-4}$ & $3 \cdot 10^{-4}$ \\
    Prior function scale & n/a & 1.0 & n/a & n/a & n/a \\
    N-Heads $1$ & $1$ & $1$ & $1$ / $512$ & $1$ / $512$ & $1$ / $512$ \\
    Ensemble size & n/a & $3$ / $15$ & n/a & n/a & n/a \\
    \midrule
    MLP hidden layers & \multicolumn{5}{c}{$3$} \\
    MLP layer width & \multicolumn{5}{c}{$512$} \\
    Discount $\gamma$ & \multicolumn{5}{c}{$0.9$}\\
    Batch size & \multicolumn{5}{c}{$512$} \\
    Adam epsilon & \multicolumn{5}{c}{$0.005 / \text{batch size}$} \\
    Initialization & \multicolumn{5}{c}{He uniform\ \citep{he2015delving}} \\
    Double DQN & \multicolumn{5}{c}{Yes \citep{vanhasseltDeepReinforcementLearning2015}} \\
    Update frequency & \multicolumn{5}{c}{$1$} \\
    Target lambda & \multicolumn{5}{c}{$1.0$} \\
    Target frequency & \multicolumn{5}{c}{$256$} \\
    \bottomrule
  \end{tabular}
\end{table}

\begin{table}[p]
  \caption{GoToDoor Environment Settings}
  \centering
  \begin{tabular}{lr}
    \toprule
    Parameter & Value\\
    \midrule
    State space dim & 35 \\
    Action space dim & 3 \\
    Task space dim & $6$ \\
    N Task Rejections & $4$ \\
    Max. Episode Length & 50 \\
    \bottomrule
  \end{tabular}
  \label{vizdoompreproc}
  \vspace{-2mm}
\end{table}

\subsection{Additional Experimental Results} \label{sup:addexps}

\begin{figure}[t]
    \begin{center}
        \includegraphics[width=\columnwidth]{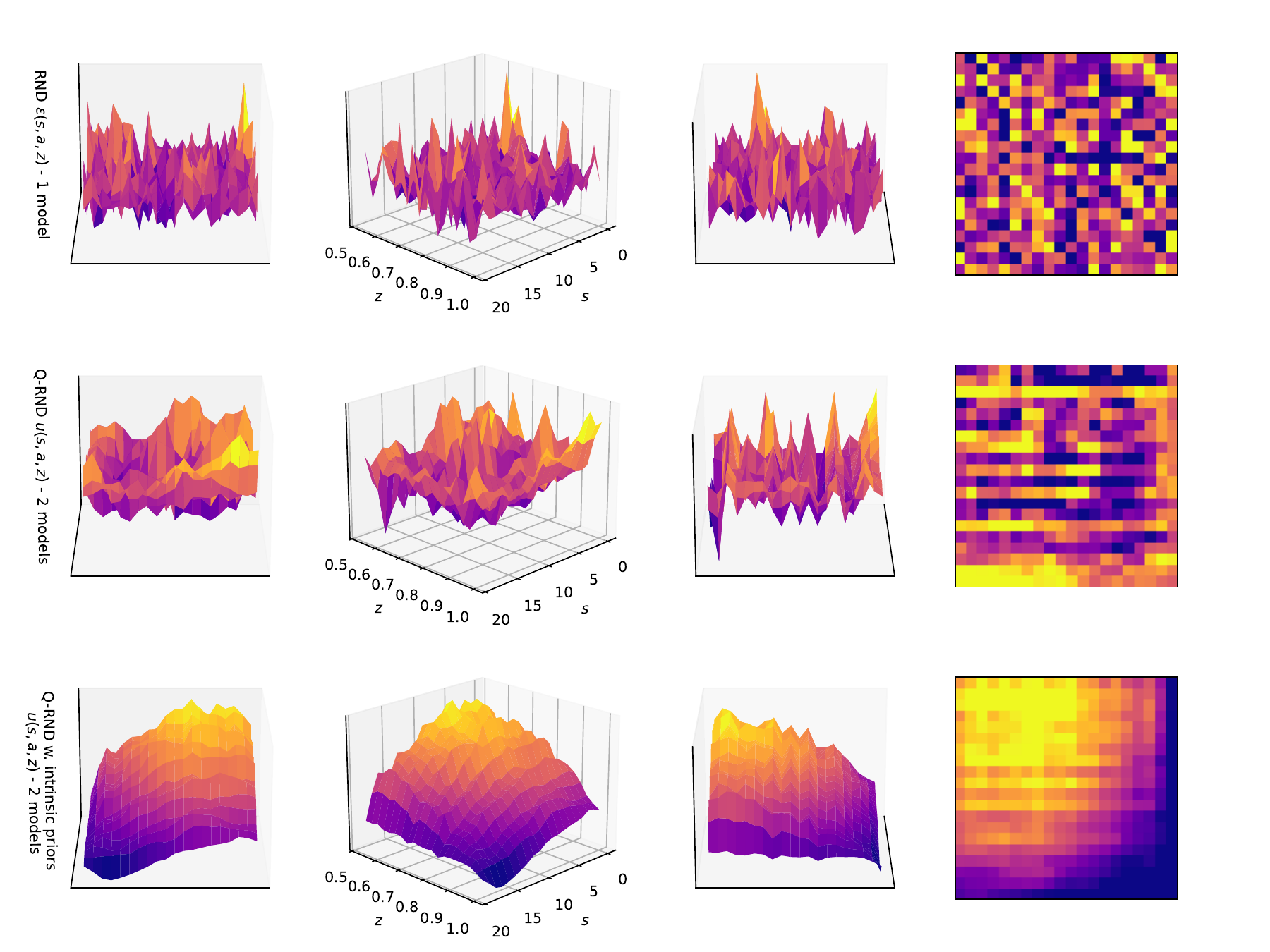}
        \vspace{0mm}
        \caption{\textit{Top Row}: RND errors. \textit{2nd Row}: Value uncertainty as measured by an intrinsic $Q$-function. \textit{3rd Row}: Value uncertainty as measured by an intrinsic $Q$-function with intrinsic reward priors.}
        \label{fig:toyvariance_rnd}
        \vspace{-4mm}
    \end{center}
\end{figure}

We report additional results of the illustrative experiment shown in Section~\ref{sec:uvu}. Fig.~\ref{fig:toyvariance_rnd} shows different uncertainty estimates in the described chain environment. The first row depicts \textit{myopic} uncertainty estimates or, equivalently, RND errors. The second and third row show propagated local uncertainties with and without the intrinsic reward prior mechanism respectively. This result shows clearly the shortcoming of the standard training pipeline for intrinsic rewards: in a standard training pipeline, the novelty bonus of RND is given only for transitions $(s_i,a_i,z_i,s_i')$ already present in the dataset and is never evaluated for OOD-actions. To generate reliable uncertainty estimates, RND requires, in addition to the RND network and the additional intrinsic $Q$-function, an algorithmic mechanism such as the \textit{intrinsic reward priors} or even more sophisticated methods as described by \citet{whiteson2020optimistic}. 


\clearpage
\end{document}